\newcommand{\g}[1]{\boldsymbol{#1}}
\newcommand{\R}[0]{\mathbb{R}}
\newcommand{\I}[1]{\mathbf{1}_{#1} }	
\newcommand{\Q}[0]{\mathcal{Q}}
\newtheorem{theorem}{Theorem}
\newtheorem{assumption}{Assumption}
\newtheorem{proposition}{Proposition}
\newtheorem{lemma}{Lemma}
\renewenvironment{proof}{{\em Proof.}}{ \hfill\qed\medskip }
\newtheorem{remark}{Remark}
\newcommand{\argmax}{\operatornamewithlimits{argmax}}
\newcommand{\argmin}{\operatornamewithlimits{argmin}}
\begin{document}

\title{\bf Global optimization for low-dimensional switching linear regression and bounded-error estimation}
\author{Fabien Lauer\\\small Universit\'e de Lorraine, LORIA, UMR 7503, F-54506 Vand\oe{}uvre-l\`es-Nancy, France\\\small CNRS}
\date{}

\maketitle

\begin{abstract}
The paper provides global optimization algorithms for two particularly difficult nonconvex problems raised by hybrid system identification: switching linear regression and bounded-error estimation. While most works focus on local optimization heuristics without global optimality guarantees or with guarantees valid only under restrictive conditions, the proposed approach always yields a solution with a certificate of global optimality. This approach relies on a branch-and-bound strategy for which we devise lower bounds that can be efficiently computed. In order to obtain scalable algorithms with respect to the number of data, we directly optimize the model parameters in a continuous optimization setting without involving integer variables. Numerical experiments show that the proposed algorithms offer a higher accuracy than convex relaxations with a reasonable computational burden for hybrid system identification. In addition, we discuss how bounded-error estimation is related to robust estimation in the presence of outliers and exact recovery under sparse noise, for which we also obtain promising numerical results.
\end{abstract}

\section{Introduction}
\label{sec:intro}

The paper tackles two problems that lie at the core of hybrid dynamical system identification, whose aim is to estimate, from input--output data, a model of a system switching at unknown instants between a number of linear subsystems. More precisely, we consider the minimization of the error of a switching linear model with a fixed number of modes and the iterative maximization of the number of data that can be approximated by a linear model with a bounded error. The latter problem, also known as bounded-error estimation, has an interest outside of hybrid systems as well and in particular for robust estimation in the presence of outliers.

The problems are understood as {\em global} minimization/maximization problems. 
However, due to their complexity, most of the literature, as reviewed in \cite{Paoletti07,Garulli12}, focuses on local optimization or heuristic approaches: for switching regression with a fixed number of modes in \cite{Vidal03,Juloski05b,Lauer11a,Lauer13a,Le13c,Lauer14a} and for the bounded-error approach to switching regression in \cite{Bemporad05,Bako11,Ozay12,Diehm13}.
Some of these methods can be proved to yield the global solution but only in specific conditions, such as in the absence of noise for \cite{Vidal03} and under data-dependent conditions difficult to check in practice for \cite{Bako11}. Recent results showed that, though being NP-hard in general, some hybrid system identification problems, including the minimization of the error of a switching linear model, have a complexity no more than polynomial in the number of data for a fixed data dimension \cite{Lauer15a,Lauer15b}. However, in practice, the complexity of the corresponding polynomial algorithms remains too high except for small data sets in small dimensions.

\paragraph*{Contribution}
Global optimization of such difficult problems in general is usually deemed impractical. Hence, we focus on instances where the data can be numerous but should live in a low-dimensional space, as is often the case in a system identification context (most examples in the literature on hybrid system identification have a dimension less than five). 
In this context, the paper proposes a branch-and-bound approach to the two problems above. Contrary to previous works, such an approach offers unconditional global optimality guarantees, while remaining computationally efficient with large data sets. Branch-and-bound is a standard approach to global optimization, but it was only considered for hybrid system identification in \cite{Roll04}, where an off-the-shelf solver is applied after a reformulation of the piecewise affine regression problem into a mixed-integer linear or quadratic program, with a number of binary variables proportional to the number of data. At the opposite, the proposed approach can handle larger data sets by developing dedicated optimization algorithms while focusing on the continuous variables of the problems, i.e., the model parameters, rather than the integer variables. 
Technically, the branch-and-bound approach relies on the derivation of a number of lower bounds on the different cost functions for parameters constrained to lie in a box (a hyperrectangle). In particular, efficiency is obtained thanks to two ingredients: i) simple lower bounds that can quickly discard boxes with very large costs, and ii) a constant-classification based criterion that allows us to more tightly lower bound the cost.

\paragraph*{Paper organization}
Section~\ref{sec:bb} describes the general branch-and-bound approach adopted to tackle the problems of interest, which are formally described in dedicated sections: Sect.~\ref{sec:switched} for switching regression and Sect.~\ref{sec:boundederror} for the bounded-error approach. Then, Sect.~\ref{sec:exp} presents numerical results and Sect.~\ref{sec:conclusion} discusses open issues.

\paragraph*{Notation}
Vectors are written in lowercase bold letters, while matrices are written in uppercase bold letters. For a vector $\g u$, the $k$th entry is denoted by $u_k$, while for a vector $\g u_j$, its $k$th entry is $u_{j,k}$. 
All inequalities between vectors, e.g., $\g u\leq \g v$, are meant entrywise. 
A box $B\subset \R^D$ is a hyperrectangular region of $\R^D$, i.e., $B=[\g u,\g v]=\prod_{k=1}^D [u_k,v_k]$ with $\g u\in\R^D$, $\g v\in\R^D$ such that $\g u\leq \g v$. The positive and negative parts of a scalar are denoted by $(\cdot)_+ = \max\{0,\cdot\}$ and $(\cdot)_- = \min\{0,\cdot\}$ and similar notations are used for the corresponding entrywise operations on vectors. Of course, $(\cdot)_+^2$ and $(\cdot)_-^2$ are understood as the squared positive and negative parts of a scalar, i.e., $(\cdot)_+^2=\left( (\cdot)_+\right)^2$ and $(\cdot)_-^2=\left( (\cdot)_-\right)^2$. The notation $|\cdot|$ denotes either the absolute value for real arguments or the cardinality for sets. The indicator function $\I{A}$ evaluates to $1$ if the boolean expression $A$ is true and $0$ otherwise.

\section{General approach}
\label{sec:bb}

Consider the global minimization of some cost function $J(\g w)$ of a vector of parameters $\g w\in\R^D$ over a box $B_{\rm init}=[\g u_{\rm init},\g v_{\rm init}] \subset \R^D$, where the different definitions of the cost function $J$ for the problems of interest will be given in dedicated sections below. 
We attack these problems with a branch-and-bound approach, summarized in Algorithm~\ref{alg:bb}, which takes a data set of regression vectors $\g x_i\in\R^d$ and target outputs $y_i \in \R$ as inputs. In hybrid system identification, the regression vectors are typically built from lagged inputs and outputs of the system \cite{Paoletti07}.

\begin{algorithm}
\caption{General branch-and-bound scheme. \label{alg:bb}}
\begin{algorithmic}
\REQUIRE A data set $\{(\g x_i,y_i)\}_{i=1}^N \subset \R^d\times \R$, initial box bounds $B_{\rm init}=[\g u_{\rm init}, \g v_{\rm init}]\subset \R^D$ and $TOL > 0$. Optionally, an initial guess of $\g w\in B_{\rm init}$. 
\STATE Initialize the global bounds $\underline{J} \leftarrow 0$, $\overline{J} \leftarrow +\infty$ or $\overline{J} \leftarrow J(\g w)$ if $\g w$ is provided, and the list of boxes $\mathcal{B} \leftarrow \{B_{\rm init}\}$. 
\WHILE{ $(\overline{J}-  \underline{J} )/\overline{J} > TOL$ }
	\STATE {\bf Split} the current box $B$ into $B^1$ and $B^2$ such that $B=B^1\cup B^2$.
	\STATE {\bf Compute upper bounds} $\overline{J}(B^1)$ and $\overline{J}(B^2)$.
	\STATE Update $\overline{J} \leftarrow \min\{\overline{J},\overline{J}(B^1),\overline{J}(B^2)\}$ and the best solution $\g w^*$.
	\STATE {\bf Compute lower bounds} $\underline{J}(B^1)$ and $\underline{J}(B^2)$.
	\STATE For $k=1,2$, append $B^k$ to the list of active boxes $\mathcal{B}$ if $\underline{J}(B^k) \leq \overline{J}$.
	\STATE Remove $B$ from the list of active boxes: $\mathcal{B} \leftarrow \mathcal{B} \setminus \{B\}$.
	\STATE Select the next box $B \leftarrow \argmin_{B\in\mathcal{B}} \underline{J}(B)$ and set $\underline{J} \leftarrow \underline{J}(B)$.
\ENDWHILE
\RETURN $\g w^*$ and $\overline{J} = J(\g w^*) \approx \min_{\g w\in B_{\rm init}} J(\g w)$.
\end{algorithmic}
\end{algorithm}

The general branch-and-bound scheme relies on computing upper and lower bounds ($\overline{J}$ and $\underline{J}$ in Algorithm~\ref{alg:bb}) on the global optimum $\min_{\g w\in B_{\rm init}} J(\g w)$. Then, regions $B$ of the search space in which the local lower bound $\underline{J}(B)$ is larger than the global upper bound $\overline{J}$ can be discarded, reducing the volume left to explore until the relative optimality gap, $(\overline{J}-\underline{J})/\overline{J}$, decreases below a predefined tolerance $TOL$. Here, the considered regions are always boxes, i.e., hyperrectangles. Upper bounds $\overline{J}(B)$ can be easily computed by some local optimization or heuristic method for a problem of interest. Alternatively, $\overline{J}(B)$ can be computed merely as the cost function value at the box base point $\g u$ or at a random point inside the box, while local optimization is only used periodically. On the other hand, lower bounds $\underline{J}(B)$ require a careful derivation, the efficiency of the approach relying mostly on the tightness of these bounds. 

Algorithm~\ref{alg:bb} retains only the solution yielding the best upper bound $\overline{J}=J(\g w^*)$. Depending on the value of $TOL$, the algorithm can terminate while there are multiple remaining active boxes possibly containing equally good solutions within the tolerance. A possible modification would be to retain a list of solution candidates with cost function values close to the best one rather than a single solution. Since such a modification would be straightforward, in the following, we focus only on the version returning a single solution.

\section{Switching linear regression}
\label{sec:switched}

We consider the identification of a switching system with $n$ modes generating a data set of $N$ points $(\g x_i, y_i)\in\R^d\times \R$, $i=1,\dots,N$, with 
\begin{equation}
	y_i = \g w_{q_i}^T \g x_i + \xi_i ,
\end{equation}
where $q_i\in \Q =\{1,\dots,n\}$ is the index of the active mode for the $i$th point, $\{\g w_j\}_{j=1}^n\subset\R^d$ is a collection of linear model parameter vectors and $\xi_i\in\R$ is a noise term. The aim here is to estimate, from the knowledge of $\{(\g x_i,y_i)\}_{i=1}^N$ and $n$ only, the concatenated parameter vector $\g w =  [\g w_1^T,\dots,\g w_n^T]^T \in \R^{nd}$. Throughout the paper, we assume a similar partitioning of all vectors from $\R^{nd}$, i.e., for $\g u\in\R^{nd}$, $\g u_j$ refers to the $j$th subvector of dimension $d$ in $\g u$. 

Least squares estimates\footnote{We restrict the presentation to the squared loss function $\ell(e) = e^2$, but similar results could be obtained for instance with the absolute loss $\ell(e)=|e|$. } of $\g w$ and $\g q=[q_1,\dots,q_N]^T$ are defined as the global solutions to
\begin{align}\label{eq:MIPq}
	&\min_{\g w\in \R^{nd}, \g q \in \Q^N} J_{\rm SWq}(\g w, \g q) ,\\
	&\mbox{with } J_{\rm SWq}(\g w, \g q) = \sum_{i=1}^N  (y_i - \g w_{q_i}^T\g x_i)^2. \nonumber
\end{align}
Note that Problem~\eqref{eq:MIPq} involves $N$ integer variables in $\g q$, which would imply a worst-case exponential complexity in the number of data for its direct global optimization. Other reformulations based on $nN$ binary variables suffer from a similar limitation, which is why the following considers a continuous optimization point of view.

Using the classification rule\footnote{When the minimum is not unique in~\eqref{eq:classif}, ties are arbitrarily broken by returning the minimal index $j$ of the minimum.}
\begin{equation}\label{eq:classif}
	q_i(\g w) = \argmin_{j\in \Q}  (y_i - \g w_j^T \g x_i)^2 ,\quad i=1,\dots, N, 
\end{equation}
Problem \eqref{eq:MIPq} can be reformulated without integer variables as in~\cite{Lauer11a}, leading to
\begin{align}\label{eq:minmin}
	&\min_{\g w \in\R^{nd}} J_{\rm SW}(\g w),\\
	 &\mbox{with } J_{\rm SW}(\g w) = \sum_{i=1}^N \min_{j\in\Q} (y_i - \g w_j^T \g x_i)^2 , \nonumber
\end{align}
and the equivalence $ J_{\rm SW}(\g w) = J_{\rm SWq}(\g w, \g q(\g w) )$. Though equivalent, the formulation in~\eqref{eq:minmin} emphasizes the major role played by the continuous variables in $\g w$, on which directly depends $\g q$. Using this fact, the dimension of the problem can be restricted to $nd$ and becomes independent of $N$.\footnote{Note that the inner minimization over $j$ in~\eqref{eq:minmin} merely amounts to taking the minimum value among $n$ real numbers and should not be seen as an embedded optimization problem.} However, the global optimization of  Problem~\eqref{eq:minmin} remains nontrivial.
  
Note that for symmetry reasons, the cost function $J_{\rm SW}$ is invariant to permutations of the subvectors $\g w_j$ in $\g w$, hence the minimizer is not unique. Such symmetries can be broken by arbitrarily imposing an ordering on the modes, for instance as
\begin{equation}\label{eq:switchedsymmetry}
	w_{j,1} \leq w_{j+1,1} ,\quad j=1,\dots,n-1 ,
\end{equation}
where $w_{j,k}$ denotes the $k$th component of the $j$th parameter vector. 
Note that ties in  the case $w_{j,1} = w_{j+1,1}$ can be broken by imposing similar constraints recursively on the remaining components. However, these additional constraints might be more difficult to take into account in the branch-and-bound approach, while they might also be of little use since the event corresponding to a tie in a global minimizer has zero measure with noisy data. 
Therefore, in the following we do not deal with such ties and focus on solving \eqref{eq:minmin} under the constraints \eqref{eq:switchedsymmetry}. More precisely, we consider the restriction of~\eqref{eq:minmin} subject to \eqref{eq:switchedsymmetry} where the domain $\R^{nd}$ is replaced by a box $B_{\rm init}\subset \R^{nd}$.

\subsection{Branch-and-bound approach}
\label{sec:switchedbb}

Many heuristics have been proposed for switching linear regression (see, e.g., \cite{Garulli12,Lauer11a,Lauer13a,Lauer14a}), and any of them can be used to compute upper bounds $\overline{J}(B)$. Here, we consider the simple and efficient $k$-LinReg method \cite{Lauer13a}. We choose to compute the initial guess of $\g w$ with it before starting the search, and additionally reuse it once in a while (e.g., every 100 iterations). Otherwise, at every iteration dealing with a box $B=[\g u, \g v]$, upper bounds are simply computed as $\overline{J}(B)=J_{\rm SW}(\g u)$.

The symmetry-breaking constraints~\eqref{eq:switchedsymmetry} can be simply imposed at the branching level by explicitly discarding regions of subboxes without feasible solutions. More precisely, we compute $B^1=[\g u^1, \g v^1]$ and $B^2 = [\g u^2, \g v^2]$ from $B=[\g u,\g v]$ by first applying a standard split along the longest side of the box:
\begin{equation}\label{eq:split1}
	(j^*,k^*) = \argmax_{(j,k) \in \Q \times \{1,\dots,d\} } v_{j,k} - u_{j,k}
\end{equation}
and 
\begin{align}\label{eq:split2}
	&\g u^1 = \g u,\quad  v^1_{j,k} = \begin{cases}
							(u_{j,k} + v_{j,k} )/2, & \mbox{if } (j,k)=(j^*,k^*),\\
							v_{j,k}  ,& \mbox{otherwise},
							\end{cases}\\
	& \g v^2 = \g v,\quad u^2_{j,k} = \begin{cases}
				(u_{j,k} + v_{j,k} )/2, & \mbox{if } (j,k)=(j^*,k^*),\\
				u_{j,k}  ,& \mbox{otherwise.}
				\end{cases}
				 \label{eq:split3}
\end{align}
Then, in the case $k^* = 1$, which is the only one concerned by~\eqref{eq:switchedsymmetry}, we correct the box bounds recursively for $j=j^*-1,\dots,1$ with
\begin{equation}\label{eq:split4}
	v^1_{j, 1} = \min \left\{ v^1_{j,1}, v^1_{j+1,1}\right\}
\end{equation}
and, for $j=j^* +1,\dots,n$, with
\begin{equation}\label{eq:split5}
	u^2_{j, 1} = \max \left\{ u^2_{j,1}, u^2_{j-1,1}\right\}.
\end{equation}
Figure~\ref{fig:boxes} illustrates the splitting rule. 
\begin{figure}
\centering
\includegraphics[width=0.5\linewidth]{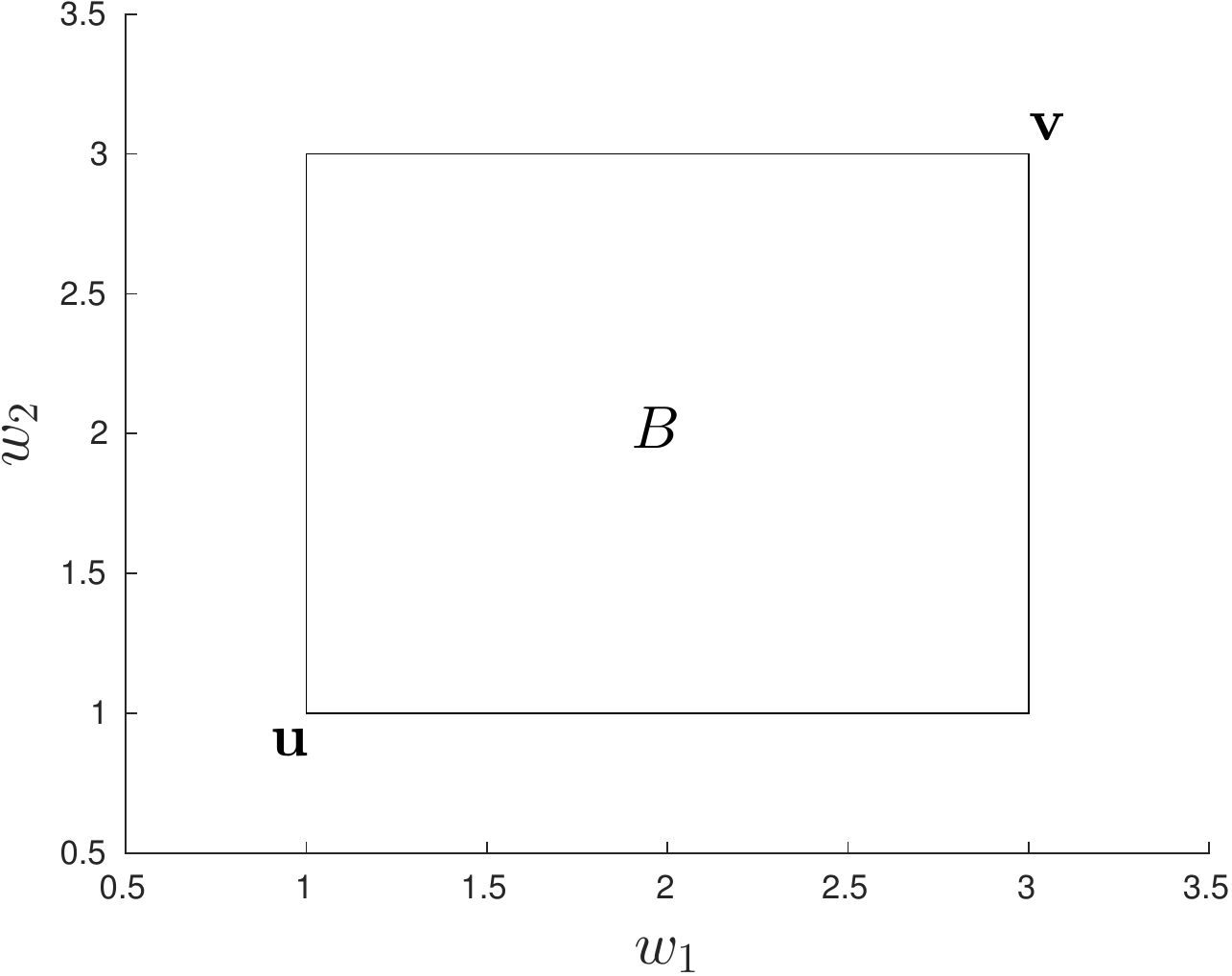}\\
\includegraphics[width=0.5\linewidth]{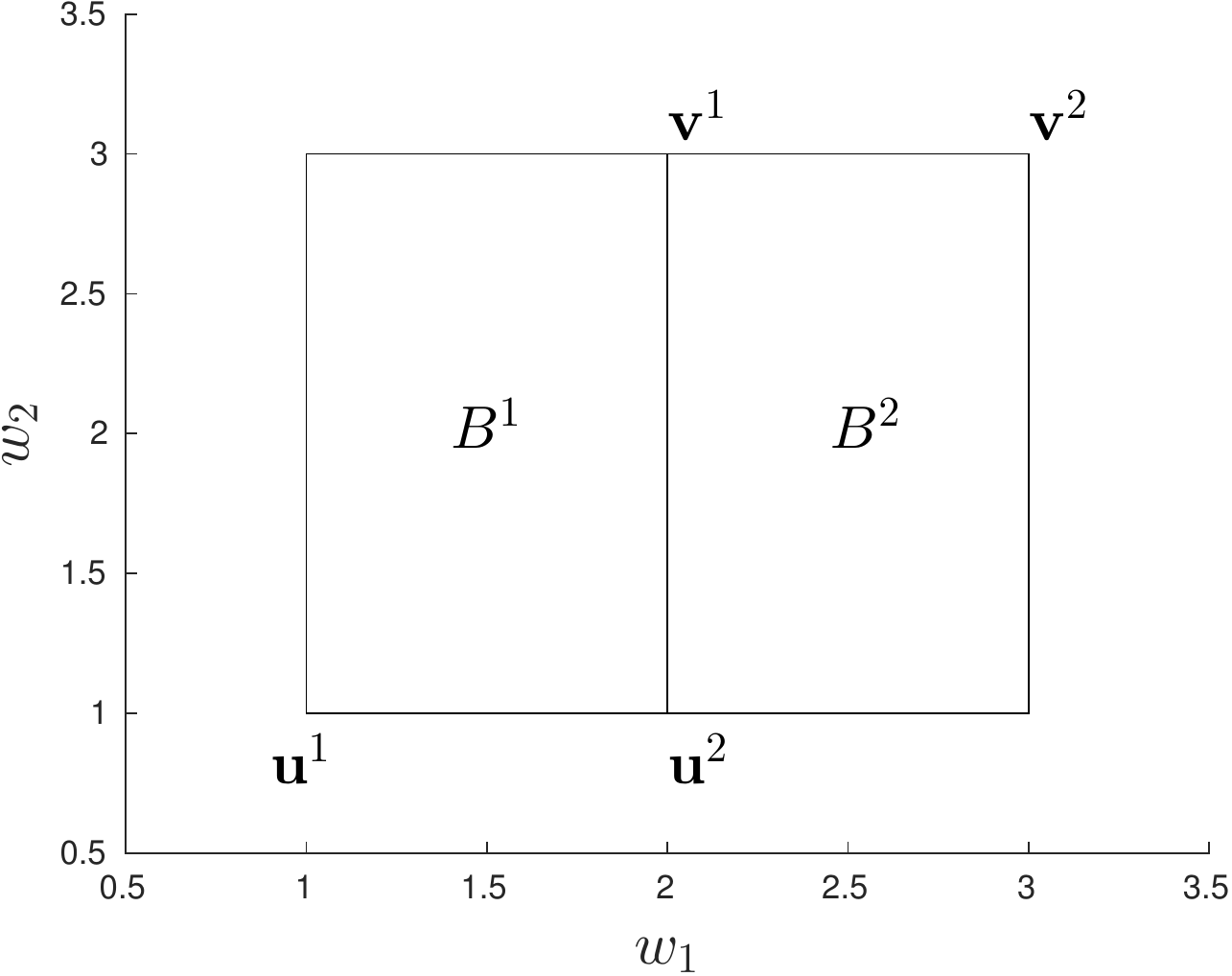}\\
\includegraphics[width=0.5\linewidth]{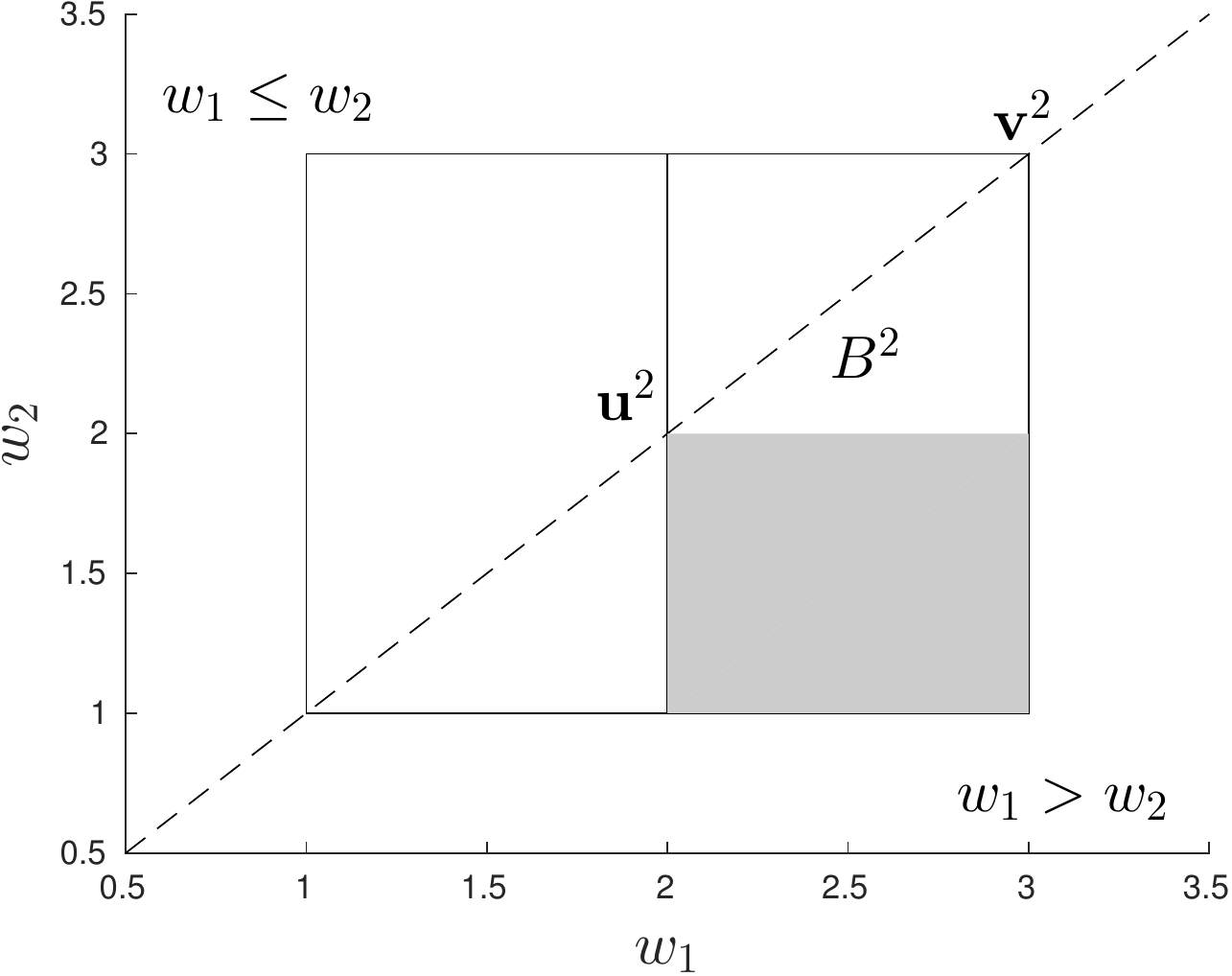}
\caption{Illustration of the splitting procedure when $n=2$ and $d=1$. {\em Top:} a box $B=[\g u,\g v]$ is a rectangular region of the plane of axis $(w_1,w_2)$ with bottom-left and top-right corners at $\g u$ and $\g v$. {\em Middle:} $B$ is split into $B^1=[\g u^1, \g v^1]$ and $B^2=[\g u^2,\g v^2]$ by application of~\eqref{eq:split1}--\eqref{eq:split3}. {\em Bottom: } $B^2$ is corrected as in~\eqref{eq:split4}--\eqref{eq:split5} to remove the shaded area that does not contain any feasible solution according to~\eqref{eq:switchedsymmetry}, i.e., $w_1> w_2$ for all $\g w$ in the shaded area. \label{fig:boxes}}
\end{figure}

\subsection{Lower bounds} 

Regarding the lower bounds, two different bounds are derived below, with increasing value (tightness) but also an increasing computational demand. In practice, we only compute the second lower bound if the first one is not large enough to discard the box (i.e., if it is smaller than the global upper bound $\overline{J}$).

We start with a preliminary result bounding the value of dot products involved in the cost function, which will be of interest throughout the paper.

\begin{lemma}\label{lem:bracketingdotprod}
For any $d$-dimensional box $B_j = [\g u_j, \g v_j]\subset \R^{d}$, we have, for $i=1,\dots,N$,
\begin{equation}\label{eq:dotprod}
	 \begin{cases}
	 	\displaystyle{\min_{\g w_j \in B_j} \g w_j^T \g x_i =  \g u_j^T\g x_i + L_i(B_j) }\\
	 	\displaystyle{\max_{\g w_j \in B_j} \g w_j^T \g x_i =  \g u_j^T\g x_i + U_i(B_j) ,}
	 	\end{cases}
\end{equation}	
where, 
\begin{equation}\label{eq:defLU}
 \begin{cases}
		L_i(B_j) = (\g v_j- \g u_j)^T  (\g x_i)_- \\
		U_i(B_j) =  (\g v_j - \g u_j)^T  (\g x_i)_+ .
	\end{cases}
\end{equation}
\end{lemma}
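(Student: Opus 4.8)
The plan is to optimize the linear function $\g w_j \mapsto \g w_j^T \g x_i$ over the box $B_j$ coordinate by coordinate, exploiting the fact that a linear function on a hyperrectangle attains its extrema at vertices and decouples across coordinates. Write $\g w_j^T \g x_i = \sum_{k=1}^d w_{j,k} x_{i,k}$. Since the box is the product $B_j = \prod_{k=1}^d [u_{j,k}, v_{j,k}]$, each term $w_{j,k} x_{i,k}$ can be minimized (resp. maximized) independently over $w_{j,k} \in [u_{j,k}, v_{j,k}]$.

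First I would handle a single coordinate $k$. If $x_{i,k} \geq 0$, then $w_{j,k} x_{i,k}$ is nondecreasing in $w_{j,k}$, so its minimum over $[u_{j,k},v_{j,k}]$ is $u_{j,k} x_{i,k}$ and its maximum is $v_{j,k} x_{i,k}$. If $x_{i,k} < 0$, the roles swap: the minimum is $v_{j,k} x_{i,k}$ and the maximum is $u_{j,k} x_{i,k}$. In both cases one can write the minimal contribution uniformly as $u_{j,k} x_{i,k} + (v_{j,k} - u_{j,k})(x_{i,k})_-$: indeed, when $x_{i,k}\geq 0$ the added term vanishes and we get $u_{j,k}x_{i,k}$, while when $x_{i,k}<0$ we get $u_{j,k}x_{i,k} + (v_{j,k}-u_{j,k})x_{i,k} = v_{j,k}x_{i,k}$, as required. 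Symmetrically, the maximal contribution is $u_{j,k} x_{i,k} + (v_{j,k}-u_{j,k})(x_{i,k})_+$.

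Next I would sum these per-coordinate optima. Because the objective separates as a sum and the constraint set is a product, the minimum (resp. maximum) of the sum equals the sum of the per-coordinate minima (resp. maxima); the minimizing $\g w_j$ is obtained by picking, in each coordinate, the endpoint achieving the per-coordinate optimum. Summing the single-coordinate expressions over $k=1,\dots,d$ gives $\min_{\g w_j\in B_j}\g w_j^T\g x_i = \g u_j^T\g x_i + (\g v_j - \g u_j)^T(\g x_i)_-$ and $\max_{\g w_j\in B_j}\g w_j^T\g x_i = \g u_j^T\g x_i + (\g v_j-\g u_j)^T(\g x_i)_+$, which are exactly \eqref{eq:dotprod}--\eqref{eq:defLU} with $L_i(B_j)$ and $U_i(B_j)$ as defined.

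There is no real obstacle here; the only thing to be careful about is the bookkeeping of the two sign cases and checking that the uniform formula with the positive/negative parts reproduces both, plus noting that the product structure of the box legitimizes the separation of the optimization across coordinates. I would state the coordinatewise argument cleanly and then invoke separability to conclude.
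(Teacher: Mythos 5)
Your proof is correct and follows essentially the same route as the paper: the paper merely reparametrizes $\g w_j = \g u_j + \g\alpha \odot (\g v_j - \g u_j)$ with $\g\alpha \in [0,1]^d$ before carrying out the same coordinatewise separation, where each term $\min_{\alpha_k\in[0,1]} \alpha_k (v_{j,k}-u_{j,k}) x_{i,k}$ reduces to the negative (resp.\ positive) part exactly as in your sign-case analysis. The bookkeeping differs only superficially; both arguments rest on separability of a linear objective over a product of intervals.
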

\begin{proof} 
Any $\g w_j\in [\g u_j, \g v_j]$ can be expressed as
\begin{equation}
	\g w_j = \g u_j + \g \alpha \odot (\g v_j-\g u_j),
\end{equation}
where $\odot$ denotes the entrywise product of vectors and $\g \alpha\in [0,1]^{d}$. Thus, for $i=1,\dots, N$, we have
\begin{equation}
 \begin{cases}
	 	\displaystyle{\min_{\g w_j \in [\g u_j, \g v_j]} \g w_j^T \g x_i =  \g u_j^T\g x_i + \min_{\g\alpha\in[0,1]^d} (\g \alpha \odot (\g v_j-\g u_j) )^T \g x_i}\\
	 	\displaystyle{\max_{\g w_j \in [\g u_j, \g v_j]} \g w_j^T \g x_i =  \g u_j^T\g x_i +\max_{\g\alpha\in[0,1]^d} (\g \alpha \odot (\g v_j-\g u_j) )^T \g x_i} 
	 	\end{cases}
\end{equation}
with  
\begin{align}
	\min_{\g\alpha\in[0,1]^d} (\g \alpha \odot (\g v_j-\g u_j) )^T \g x_i &= \min_{\g\alpha\in[0,1]^d} \g \alpha^T ( (\g v_j-\g u_j) \odot \g x_i ) \nonumber\\
		& = \sum_{k=1}^d \min_{\alpha_k\in[0,1]} \alpha_k (v_{j,k}-u_{j,k}) x_{i,k}\nonumber\\
		& = \g 1^T ( (\g v_j- \g u_j) \odot  \g x_i)_- \nonumber\\
		&= L_i(B_j)
\end{align}
and, similarly, $\max_{\g\alpha\in[0,1]^d} (\g \alpha \odot (\g v_j-\g u_j) )^T \g x_i = U_i(B_j)$.
\end{proof}

\subsubsection{Lower bound based on pointwise minimum errors}

The first lower bound is based on a pointwise decomposition of the optimization problem with respect to the index $i$ of data points. In particular, we use the fact that, for any $i\in\{1,\dots,N\}$ and $j\in\Q$, the pointwise error of a parameter vector $\g w_j$ at a given point $(\g x_i,y_i)$, 
\begin{equation}
	e_i(\g w_j) = y_i - \g w_j^T \g x_i, 
\end{equation}
can be made smaller in magnitude if we are not trying to simultaneously minimize the errors at other points.
Therefore, the global cost $J_{\rm SW}(\g w)$ must be at least as large as the sum of independently optimized pointwise errors. Formally, for any $i\in\{1,\dots,N\}$ and box $B_j=[\g u_j, \g v_j]\subset \R^d$, let 
\begin{align}\label{eq:defeiLU}
		e_i^L(B_j) = e_i(\g u_j) - L_i(B_j)\\
		e_i^U(B_j) = e_i(\g u_j) - U_i(B_j)\nonumber
\end{align}
with $L_i(B_j)$ and $U_i(B_j)$ as in~\eqref{eq:defLU}. Then, 
we have the following lower bound.
\begin{lemma}\label{lem:pwe}
Given a box $B= B_1\times\dots\times B_n$ with $B_j=[\g u_j, \g v_j] \subset \R^{d}$, $j=1,\dots,n$, and the notations defined above, 
\begin{equation}
	\underline{J}(B) = \sum_{i=1}^N \min_{j\in\Q} \left\{\left(e_i^U(B_j) \right)_+^2  + \left(e_i^L(B_j)\right)_-^2 \right\}
\end{equation}
is a lower bound on $\min_{\g w \in B} J_{\rm SW}(\g w)$.
\end{lemma}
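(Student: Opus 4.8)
The plan is to decompose the bound over data points and, for each point, over the candidate modes, reducing everything to a one–dimensional question: how small can the squared residual $e_i(\g w_j)^2$ be when $\g w_j$ is confined to the box $B_j$. Since $J_{\rm SW}(\g w)=\sum_{i=1}^N \min_{j\in\Q} e_i(\g w_j)^2$ and, for $\g w\in B=B_1\times\dots\times B_n$, each subvector $\g w_j$ lies in $B_j$ independently of the others, it suffices to lower bound each term $e_i(\g w_j)^2$ uniformly over $\g w_j\in B_j$ and then recombine.

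First I would fix $i\in\{1,\dots,N\}$ and $j\in\Q$ and characterize the range of the scalar $e_i(\g w_j)=y_i-\g w_j^T\g x_i$ as $\g w_j$ varies over $B_j$. As $e_i$ is affine in $\g w_j$, this range is an interval, and Lemma~\ref{lem:bracketingdotprod} pins down its endpoints: $\g w_j^T\g x_i$ ranges over $[\g u_j^T\g x_i + L_i(B_j),\,\g u_j^T\g x_i + U_i(B_j)]$, so $e_i(\g w_j)$ ranges over $[e_i^U(B_j),\,e_i^L(B_j)]$ with the quantities defined in~\eqref{eq:defeiLU}. Note that $L_i(B_j)\le U_i(B_j)$ since $\g v_j-\g u_j\ge \g 0$ and $(\g x_i)_-\le \g 0\le (\g x_i)_+$, so this interval is nonempty and correctly oriented.

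Then I would record the elementary identity $\min_{t\in[a,b]} t^2 = (a)_+^2 + (b)_-^2$ valid for any $a\le b$, checked by the three cases $a>0$, $b<0$ and $a\le 0\le b$: in each case at most one of the two terms is nonzero and it equals the squared distance from $0$ to $[a,b]$. Applying it with $a=e_i^U(B_j)$ and $b=e_i^L(B_j)$ (legitimate by the ordering just established) gives $\min_{\g w_j\in B_j} e_i(\g w_j)^2 = \left(e_i^U(B_j)\right)_+^2 + \left(e_i^L(B_j)\right)_-^2$, the minimum being attained since $B_j$ is compact.

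Finally I would assemble the bound. For any $\g w\in B$ and any $i$, each $\g w_j\in B_j$, hence $e_i(\g w_j)^2\ge \min_{\g w_j'\in B_j} e_i(\g w_j')^2$ for every $j\in\Q$; taking the minimum over $j$ on both sides (monotonicity of $\min$ over a pointwise inequality) yields $\min_{j\in\Q} e_i(\g w_j)^2 \ge \min_{j\in\Q}\left\{\left(e_i^U(B_j)\right)_+^2 + \left(e_i^L(B_j)\right)_-^2\right\}$. Summing over $i$ gives $J_{\rm SW}(\g w)\ge \underline{J}(B)$ for every $\g w\in B$, so $\min_{\g w\in B} J_{\rm SW}(\g w)\ge \underline{J}(B)$. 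I do not expect a genuine obstacle here; the only point requiring care is keeping the interval endpoints in the right order (hence the $L_i\le U_i$ remark) so that the positive/negative-part bookkeeping in the $\min t^2$ identity lines up exactly with the stated formula, and making sure the min-over-$j$ inequality is used in the correct direction.
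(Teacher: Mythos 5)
Your proof is correct and takes essentially the same route as the paper: bracket the residual over $B_j$ via Lemma~\ref{lem:bracketingdotprod}, use the squared-distance-to-an-interval identity (which is exactly the paper's~\eqref{eq:mineij}) to get $\min_{\g w_j\in B_j} e_i^2(\g w_j) = \left(e_i^U(B_j)\right)_+^2 + \left(e_i^L(B_j)\right)_-^2$, then take the minimum over $j$ and sum over $i$. The only cosmetic difference is that you establish the pointwise inequality $J_{\rm SW}(\g w)\geq \underline{J}(B)$ for arbitrary $\g w\in B$ directly, whereas the paper phrases the same step as an exchange of minima followed by evaluation at the minimizer; the two are equivalent.
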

\begin{proof}
For any $\g a\in B$, 
\begin{equation}
	\min_{j\in\Q} e_i^2(\g a_j) \geq  \min_{\g w \in B} \min_{j\in\Q} e_i^2(\g w_j) = \min_{j\in\Q}  \min_{\g w \in B}  e_i^2(\g w_j) 
\end{equation}
and, by summing over $i$, 
\begin{equation}
	J_{\rm SW}(\g a)\geq \sum_{i=1}^N\min_{j\in\Q}  \min_{\g w \in B}  e_i^2(\g w_j) = \sum_{i=1}^N\min_{j\in\Q}  \min_{\g w_j \in B_j}  e_i^2(\g w_j) .
\end{equation}
Since this holds for any $\g a\in B$, it holds in particular for the one yielding the minimum of $J_{\rm SW}$ over $B$ and we obtain
\begin{equation}
\min_{\g w \in B} J_{\rm SW}(\g w) \geq \sum_{i=1}^N \min_{j\in\Q} \min_{\g w_j \in B_j} e_i^2(\g w_j) . \label{eq:lbpwe1}
\end{equation}
On the other hand, Lemma~\ref{lem:bracketingdotprod} yields
\begin{equation}\label{eq:bracketingeij}
	\begin{cases}
	 	\displaystyle{\min_{\g w_j \in B_j} e_i(\g w_j) =  e_i(\g u_j) - U_i(B_j) = e_i^U(B_j)} \\
	 	\displaystyle{\max_{\g w_j \in B_j} e_i(\g w_j) =  e_i(\g u_j) - L_i(B_j) = e_i^L(B_j),}
	 	\end{cases}
\end{equation}
and thus
\begin{equation}
	\min_{\g w_j \in B_j} |e_i(\g w_j)| = \begin{cases}
		e_i^U(B_j),& \mbox{if } e_i^U(B_j) > 0\\
		|e_i^L(B_j)|,& \mbox{if } e_i^L(B_j) < 0\\		
		0,& \mbox{otherwise },
	\end{cases}
\end{equation}
which can be rewritten as 
\begin{equation}\label{eq:mineij}
	\min_{\g w_j \in B_j} e_i^2(\g w_j) = \left(e_i^U(B_j)\right)_+^2  + \left(e_i^L(B_j)\right)_-^2.
\end{equation}
Combining~\eqref{eq:lbpwe1} and~\eqref{eq:mineij} completes the proof. 
\end{proof}

From a computing time perspective, once an upper bound has been computed as $\overline{J}(B) = J_{\rm SW}(\g u)$, only the values of $L_i(B_j)$ and $U_i(B_j)$ are needed to compute $e_i^L(B_j)$ and $e_i^U(B_j)$ for the evaluation of the lower bound $\underline{J}(B)$ in Lemma~\ref{lem:pwe}; and, as will be seen below, these quantities will be used again.

\subsubsection{Lower bound based on constant classifications}

Let us denote by $I=\{1,\dots,N\}$ the set of all point indexes and, for all $j\in\Q$, the subset of indexes for which the classification remains constant and equal to $j$ in a box $B=[\g u, \g v]\subset \R^{nd}$ by
\begin{equation}\label{eq:Ij}
	I_j(B) = \{ i\in I : \forall \g w\in B,\ q_i(\g w) =j \} ,
\end{equation}
where $q_i(\g w)$ is given by \eqref{eq:classif}. 
We also define $I_0(B)=I \setminus \cup_{j=1}^n I_j(B)$ as the subset of the remaining indexes. Thus, any data index $i\in I$ is exactly in one and only one of the sets $I_j(B)$, $j=0,\dots,n$. 
The following result shows how these index sets can be determined.

\begin{lemma}\label{lem:constantclassif}
Given a box $B= B_1\times\dots\times B_n$ with $B_j=[\g u_j, \g v_j] \subset \R^{d}$, $j=1,\dots,n$, and the notations above, we have, for $j=1,\dots,n$, 
\begin{align}
	I_j (B)=  \Big\{ i\in I: 
	&\max \left\{ e_i^U(B_j)^2 , e_i^L(B_j)^2 \right\} <  \min_{k< j} \left(e_i^U(B_k)\right)_+^2+\left(e_i^L(B_k)\right)_-^2 ,\nonumber\\
	&\max \left\{ e_i^U(B_j)^2 , e_i^L(B_j)^2 \right\} 
	 \leq  \min_{k> j} \left(e_i^U(B_k)\right)_+^2+\left(e_i^L(B_k)\right)_-^2\Big\}.\nonumber
\end{align}
\end{lemma}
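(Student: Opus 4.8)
The plan is to characterize membership in $I_j(B)$ directly from the definition $I_j(B) = \{i : \forall \g w \in B,\ q_i(\g w) = j\}$, using the tie-breaking convention in~\eqref{eq:classif} (ties go to the smallest index). Fix a data index $i$ and a mode index $j$. By definition of $q_i(\g w)$ via the classification rule, $q_i(\g w) = j$ means $e_i^2(\g w_j) < e_i^2(\g w_k)$ for all $k < j$ (strict, because a tie with a smaller index would break towards that smaller index) and $e_i^2(\g w_j) \le e_i^2(\g w_k)$ for all $k > j$ (non-strict suffices, since a tie with a larger index still resolves to $j$). So $i \in I_j(B)$ iff these inequalities hold simultaneously for \emph{every} $\g w \in B$.

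The key observation is that, because $B = B_1 \times \dots \times B_n$ is a product box, the variables $\g w_1, \dots, \g w_n$ range independently, so the quantified conjunction decouples pair by pair: for each $k \ne j$, the condition "$e_i^2(\g w_j) \lessgtr e_i^2(\g w_k)$ for all $\g w \in B$" is equivalent to "$\max_{\g w_j \in B_j} e_i^2(\g w_j) \lessgtr \min_{\g w_k \in B_k} e_i^2(\g w_k)$" (with the appropriate strict/non-strict inequality). The left side of each such comparison involves $\max_{\g w_j \in B_j} e_i^2(\g w_j)$; since $e_i(\g w_j)$ ranges over the interval $[e_i^U(B_j), e_i^L(B_j)]$ by~\eqref{eq:bracketingeij} (established in the proof of Lemma~\ref{lem:pwe}), the square is maximized at one of the endpoints, giving $\max_{\g w_j \in B_j} e_i^2(\g w_j) = \max\{e_i^U(B_j)^2, e_i^L(B_j)^2\}$. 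The right side involves $\min_{\g w_k \in B_k} e_i^2(\g w_k)$, which is exactly the quantity computed in~\eqref{eq:mineij}, namely $(e_i^U(B_k))_+^2 + (e_i^L(B_k))_-^2$. Substituting these two closed forms and taking the minimum over $k < j$ (strict inequality) and over $k > j$ (non-strict inequality) yields precisely the stated description of $I_j(B)$.

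The main subtlety — and the point that needs the most care — is the handling of strictness and ties, i.e., justifying why the comparison with lower-indexed modes must be strict while the comparison with higher-indexed modes need not be, and checking that this is consistent with the tie-breaking rule when the minimizer/maximizer over a sub-box is attained at multiple points. One should verify that if $\max_{\g w_j\in B_j} e_i^2(\g w_j) \le \min_{\g w_k\in B_k}e_i^2(\g w_k)$ fails to be strict for some $k<j$, then there exists $\g w\in B$ realizing an actual tie $e_i^2(\g w_j) = e_i^2(\g w_k)$ (again using independence across sub-boxes and continuity of $e_i^2$ on the sub-box, whose image is a closed interval so extrema are attained), and for that $\g w$ the rule returns $k \ne j$, so $i \notin I_j(B)$. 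Conversely, one checks that when all the stated inequalities hold, $q_i(\g w)=j$ for every $\g w\in B$ regardless of how individual ties among the errors at a fixed $\g w$ are positioned. Everything else is a routine reduction to the already-proved Lemmas~\ref{lem:bracketingdotprod} and~\ref{lem:pwe}.
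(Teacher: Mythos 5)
Your proposal is correct and follows essentially the same route as the paper's proof: reduce membership in $I_j(B)$ to the strict/non-strict pairwise inequalities dictated by the tie-breaking rule, decouple the quantifiers using the product structure $B=B_1\times\dots\times B_n$, and substitute $\max_{\g w_j\in B_j}e_i^2(\g w_j)=\max\{e_i^U(B_j)^2,e_i^L(B_j)^2\}$ and $\min_{\g w_k\in B_k}e_i^2(\g w_k)=(e_i^U(B_k))_+^2+(e_i^L(B_k))_-^2$ from~\eqref{eq:bracketingeij} and~\eqref{eq:mineij}. Your extra remark on attainment of the extrema (so that universal strict inequality is equivalent to a strict max--min comparison) is a sound clarification of a point the paper treats implicitly, not a different argument.
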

\begin{proof}
For any $j\in\Q$, given the definition of $q_i$ in~\eqref{eq:classif} and the fact that ties are broken by setting $q_i(\g w)$ as the smallest mode index, the set $I_j(B)$ contains exactly the indexes $i\in I$ for which 
\begin{align}
	\forall \g w\in B,&\  \forall k\in\Q\setminus\{j\}, \quad (y_i - \g w_j^T \g x_i)^2 
	\begin{cases}
		< (y_i - \g w_k^T \g x_i)^2 ,&\mbox{if } k<j\\
		\leq (y_i - \g w_k^T \g x_i)^2 ,&\mbox{if } k>j.
	\end{cases}\nonumber
\end{align}
Since the left-hand side and the right-hand side depend on different components, $B_j$ and $B_k$, of $B$ and since with the constraint $\g w\in B$ the component $\g w_j\in B_j$ can freely evolve independently of $\g w_k\in B_k$, this condition is equivalent to
\begin{align}
	 \forall \g w_j\in B_j,&\ \forall k\in\Q\setminus\{j\},\ \forall \g w_k\in B_k,\quad (y_i - \g w_j^T \g x_i)^2 
	\begin{cases}
		< (y_i - \g w_k^T \g x_i)^2 ,&\mbox{if } k<j\\
		\leq (y_i - \g w_k^T \g x_i)^2 ,&\mbox{if } k>j,
	\end{cases}\nonumber
\end{align}
which can be rewritten in the more compact form
\begin{equation}
	 \begin{cases}
	 	\displaystyle{\max_{\g w_j \in B_j} e_i^2(\g w_j) < \min_{k< j} \min_{\g w_k \in B_k} e_i^2(\g w_k) }\\
	 	\displaystyle{\max_{\g w_j \in B_j} e_i^2(\g w_j) \leq \min_{k> j} \min_{\g w_k \in B_k} e_i^2(\g w_k) } .
	 \end{cases}
\end{equation}
Using~\eqref{eq:mineij} with $k$ instead of $j$ in the right-hand sides and noting that~\eqref{eq:bracketingeij} yields the left-hand sides as
\begin{equation}\label{eq:maxeij}
	\max_{\g w_j \in B_j} e_i^2(\g w_j) = \max\left\{e_i^U(B_j)^2 , e_i^L(B_j)^2 \right\}
\end{equation}
completes the proof.
\end{proof}

Once the sets $I_j(B)$, $j=0,\dots,n$, have been determined, the following gives an improved lower bound by constraining the error over points with index in a set $I_j(B)$ for $j\geq 1$ to be computed with respect to a single linear model. 
\begin{lemma}\label{lem:lbclassif}
Given a box $B=[\g u, \g v]\subset \R^{nd}$ and the notations above, for any $\mathcal{J} \subseteq \Q$,
\begin{align}
	\underline{J}(B)=&\sum_{i\in I_0(B)}\min_{j\in\Q} \left\{\left(e_i^U(B_j)\right)_+^2  + \left(e_i^L(B_j)\right)_-^2 \right\} 
	+\sum_{j \in \mathcal{J}}\min_{\g w_j \in B_j} \sum_{i\in I_j(B)} (y_i - \g w_j^T \g x_i)^2 \label{eq:lbclassif}\\
	&+\sum_{j \in \Q\setminus \mathcal{J}}\sum_{i\in I_j(B)}\left(e_i^U(B_j)\right)_+^2+\left(e_i^L(B_j)\right)_-^2 \nonumber
\end{align}
is a lower bound on $\min_{\g w \in B} J_{\rm SW}(\g w)$.
\end{lemma}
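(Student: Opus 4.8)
The plan is to fix an arbitrary $\g w\in B$, split the cost $J_{\rm SW}(\g w)=\sum_{i=1}^N \min_{j\in\Q}e_i^2(\g w_j)$ according to the partition $I=I_0(B)\cup I_1(B)\cup\dots\cup I_n(B)$ introduced just before the lemma, lower-bound each of the $n+1$ blocks by the matching term of $\underline{J}(B)$ in~\eqref{eq:lbclassif}, and finally take the minimum over $\g w\in B$ on the left-hand side (which is legitimate since the resulting bound does not depend on $\g w$). This is the same template as in the proof of Lemma~\ref{lem:pwe}, with the refinement coming from treating the blocks $I_j(B)$, $j\geq 1$, more carefully.

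For the block $I_0(B)$ I would simply recycle the argument of Lemma~\ref{lem:pwe}: for each such $i$, $\min_{j\in\Q}e_i^2(\g w_j)\geq\min_{j\in\Q}\min_{\g w_j\in B_j}e_i^2(\g w_j)$, and~\eqref{eq:mineij} rewrites the right-hand side as $\min_{j\in\Q}\{(e_i^U(B_j))_+^2+(e_i^L(B_j))_-^2\}$, which is the first sum in~\eqref{eq:lbclassif}. For a fixed $j\in\Q$ and $i\in I_j(B)$, the defining property~\eqref{eq:Ij} says $q_i(\g w)=j$ for \emph{this} $\g w$ (indeed for all of $B$), so by the classification rule~\eqref{eq:classif} we have $\min_{k\in\Q}e_i^2(\g w_k)=e_i^2(\g w_j)$; summing over $i\in I_j(B)$ turns the contribution of this block into $\sum_{i\in I_j(B)}e_i^2(\g w_j)$, a quantity depending on $\g w$ only through $\g w_j\in B_j$. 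Now two valid relaxations are available: if $j\in\mathcal{J}$, bound it below by $\min_{\g w_j\in B_j}\sum_{i\in I_j(B)}(y_i-\g w_j^T\g x_i)^2$, which keeps the errors coupled through one model and gives the second sum in~\eqref{eq:lbclassif}; if $j\in\Q\setminus\mathcal{J}$, bound it below by $\sum_{i\in I_j(B)}\min_{\g w_j\in B_j}e_i^2(\g w_j)$ and apply~\eqref{eq:mineij} once more to each term, producing the third sum. Adding the three blocks yields $J_{\rm SW}(\g w)\geq\underline{J}(B)$ for every $\g w\in B$, hence the claim.

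I do not expect a real obstacle: the content is entirely in the bookkeeping of which relaxation is applied where, and in the one genuinely new observation that, once the classification is frozen to $j$ on $I_j(B)$, the errors at those indices are all functions of the \emph{same} subvector $\g w_j$, so minimizing their sum jointly over $B_j$ can only decrease it — this is precisely what lets us replace a sum of per-point minima by a single joint box-constrained least-squares minimum, and it is why this bound dominates that of Lemma~\ref{lem:pwe}. The only care needed is to note that both the coupled bound (for $j\in\mathcal{J}$) and the pointwise bound (for $j\notin\mathcal{J}$) are valid regardless of the choice of $\mathcal{J}$, so $\mathcal{J}$ is a free knob trading tightness against the cost of solving the inner least-squares problems; the rigorous justification of the coupled step is a direct unwinding of~\eqref{eq:Ij} and~\eqref{eq:classif} as sketched above.
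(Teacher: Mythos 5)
Your proof is correct and follows essentially the same route as the paper: partition the indices into $I_0(B),I_1(B),\dots,I_n(B)$, use the constant-classification property to replace $\min_{k\in\Q}e_i^2(\g w_k)$ by $e_i^2(\g w_j)$ on each $I_j(B)$ with $j\geq 1$, and then apply the coupled box-constrained least-squares bound for $j\in\mathcal{J}$ and the pointwise bound of~\eqref{eq:mineij} otherwise. The only cosmetic difference is that the paper first establishes the case $\mathcal{J}=\Q$ and then relaxes to general $\mathcal{J}$, whereas you treat both cases in a single pass.
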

\begin{proof}
Since each data point index $i\in I$ is exactly in one and only one of the sets $I_j(B)$, $j=0,\dots,n$, 
for all $\g w\in B$, the cost function in~\eqref{eq:minmin} can be rewritten as 
\begin{align}
 J_{\rm SW}(\g w) &= \sum_{j=0}^n \sum_{i\in I_j(B)} \min_{j\in\Q} (y_i - \g w_j^T\g x_i)^2 \\
 		&=  \sum_{i\in I_0(B)}\min_{j\in\Q} e_i^2(\g w_j) + \sum_{j=1}^n \sum_{i\in I_j(B)} e_i^2(\g w_j)  ,\nonumber 		
\end{align}
where the second line is due to the definition of the $I_j(B)$'s in~\eqref{eq:Ij}. Then, the sum over $I_0(B)$ can be lower bounded as in~\eqref{eq:lbpwe1}, while each of the other terms only involves the $j$th subvector $\g w_j$:
\begin{align}
	\min_{\g w \in B} J_{\rm SW}(\g w)  \geq \sum_{i\in I_0(B)} \min_{j\in\Q} \min_{\g w_j \in B_j} e_i^2(\g w_j) + \sum_{j=1}^n  \min_{\g w_j \in B_j} \sum_{i\in I_j(B)} e_i^2(\g w_j).
\end{align}
Substituting~\eqref{eq:mineij} in the above for all $i\in I_0(B)$ proves the Lemma for the case $\mathcal{J} = \Q$. To complete the proof for any $\mathcal{J}\subset\Q$, we write
\begin{align}
	\sum_{j=1}^n  \min_{\g w_j \in B_j} \sum_{i\in I_j(B)} e_i^2(\g w_j) \geq &\sum_{j\in\mathcal{J}}  \min_{\g w_j \in B_j} \sum_{i\in I_j(B)} e_i^2(\g w_j)  + \sum_{j\in\Q\setminus\mathcal{J}} \sum_{i\in I_j(B)}  \min_{\g w_j \in B_j} e_i^2(\g w_j)
\end{align}
and lower bound the second term by invoking again~\eqref{eq:mineij} for all $i\in I_j(B)$ and $j\in \Q\setminus \mathcal{J}$.
\end{proof}

Lemma~\ref{lem:lbclassif} can lower bound the error over the points with index in $I_j(B)$ for $j\geq 1$ in two different manners: the second term in~\eqref{eq:lbclassif} corresponds to a box-constrained least squares error, while the third term is the sum of pointwise minimum errors for the $j$th model. Thus, the degree of freedom left by the choice of $\mathcal{J}$ in Lemma~\ref{lem:lbclassif} can be used to trade off computing time for accuracy: solving a constrained least squares problem yields a larger lower bound but is more demanding than summing pointwise minimum errors (the latter being computed with very little effort given that all quantities involved have already been computed when determining $I_j(B)$). 
Thus, Lemma~\ref{lem:lbclassif} can be used to provide a sequence of lower bounds with increasing complexity: starting with $\mathcal{J}=\emptyset$, we increment the cardinality of $\mathcal{J}$ and stop as soon as the lower bound reaches the upper bound, at which point the box can be discarded from the search.

\subsection{Convergence} 
\label{sec:switchingcvg}

The convergence of the branch-and-bound algorithm follows from the tightness of the lower bound in Lemma~\ref{lem:lbclassif}. It is formally stated below under two assumptions.
\begin{assumption}\label{ass:globalpositive}
The global optimum of Problem~\eqref{eq:minmin} is strictly positive:
\begin{equation}
	J^* = \min_{\g w\in B_{\rm init}} J_{\rm SW}(\g w)	> 0.
\end{equation}
\end{assumption}
\begin{assumption}\label{ass:upperbounds}
Upper bounds in a box $B=[\g u,\g v]$ are computed as $\overline{J}(B) = J_{\rm SW}(\g u)$ or such that $\overline{J}(B) \leq J_{\rm SW}(\g u)$. 
\end{assumption}
Assumption~\ref{ass:globalpositive} merely requires that the data cannot be exactly fitted by $n$ linear models, which is most often the case with noisy measurements. Otherwise, the convergence of Algorithm~\ref{alg:bb} can be proved similarly for noiseless data by changing the stopping criterion on the relative accuracy, $(\overline{J}-  \underline{J} )/\overline{J} \leq TOL$, for one on the absolute accuracy: $\overline{J}-  \underline{J} \leq TOL$.
Assumption~\ref{ass:upperbounds} simply requires that the upper bounds are at least as accurate as the straightforward computation of the cost function value at the box base point $\g u$. Though this depends on the precise choice of heuristic for computing the upper bounds, Assumption~\ref{ass:upperbounds} can always be made to hold simply by setting $\overline{J}(B) \leftarrow \min\{\overline{J}(B) , J_{\rm SW}(\g u)\}$. 
\begin{theorem}\label{thm:converge}
Under Assumptions~\ref{ass:globalpositive}--\ref{ass:upperbounds}, Algorithm~\ref{alg:bb} as described above with 
lower bounds $\underline{J}(B)$ computed as in Lemma~\ref{lem:lbclassif} with $\mathcal{J}=\Q$ converges in a finite number of iterations for any $TOL>0$.
\end{theorem}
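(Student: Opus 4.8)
The plan is to show that no box can survive indefinitely in the active list $\mathcal{B}$: either it is discarded because its lower bound exceeds $\overline{J}$, or it is split into smaller boxes, and after finitely many splits every descendant box is so small that the lower bound of Lemma~\ref{lem:lbclassif} (with $\mathcal{J}=\Q$) is within $TOL$ of the true optimum over that box, which forces termination. Concretely, I would track a nested sequence of boxes $B^{(0)}\supset B^{(1)}\supset\cdots$ produced by the algorithm if it did not stop; by the splitting rule~\eqref{eq:split1}--\eqref{eq:split3} that always bisects the longest side, the diameter of $B^{(m)}$ tends to $0$, so the sequence shrinks to a single point $\g w^\infty\in B_{\rm init}$.

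The key quantitative step is a bound of the form
\begin{equation}
	\min_{\g w\in B} J_{\rm SW}(\g w) - \underline{J}(B) \;\leq\; C\,\mathrm{diam}(B)
\end{equation}
for boxes $B$ small enough that the classification is constant on $B$, i.e. $I_0(B)=\emptyset$. First I would argue that once the nested boxes shrink below some scale, $I_0(B^{(m)})=\emptyset$: this is exactly the statement that at the limit point $\g w^\infty$ no data point is tied between two modes with strict inequality failing, which I would handle by noting that for a point $i$ not tied at $\g w^\infty$, the strict/non-strict inequalities defining $q_i$ persist on a neighbourhood, and by Lemma~\ref{lem:constantclassif} the index $i$ enters $I_{q_i(\g w^\infty)}(B^{(m)})$ for $m$ large; ties occur on a zero-measure set but more to the point, even at a tie the definition of $I_j$ via~\eqref{eq:classif} with the smallest-index rule still places $i$ into some $I_j(B^{(m)})$ eventually, or else $i$ stays in $I_0$ forever, which I must rule out — this is where I expect the main obstacle to lie and where the argument needs care (one may need to invoke Assumption~\ref{ass:globalpositive} to exclude the degenerate tie persisting on all $B^{(m)}$, or argue that a persistent tie contributes a strictly positive amount that the lower bound also captures). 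Once $I_0(B)=\emptyset$, the lower bound reduces to $\sum_{j\in\Q}\min_{\g w_j\in B_j}\sum_{i\in I_j(B)}(y_i-\g w_j^T\g x_i)^2$, which is the exact minimum of $J_{\rm SW}$ over $B$ restricted to the constant classification — but since the classification is constant on $B$, that restricted minimum \emph{equals} $\min_{\g w\in B}J_{\rm SW}(\g w)$, so in fact $\underline{J}(B)=\min_{\g w\in B}J_{\rm SW}(\g w)$ with no gap at all.

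Assembling the pieces: pick the box $B^{(m)}$ selected in the while-loop; it satisfies $\underline{J}=\underline{J}(B^{(m)})\leq J^*$ always (the lower bound is valid on $B_{\rm init}$ via the usual argument that $\min_{\mathcal B}\underline J(B)\le J^*$), and for $m$ large enough $I_0(B^{(m)})=\emptyset$, hence $\underline{J}(B^{(m)})=\min_{\g w\in B^{(m)}}J_{\rm SW}(\g w)\geq J^*$ — wait, that direction is the wrong way, so instead I use: the selected box is the one minimizing $\underline J$, so $\underline J\le J^*$; meanwhile by Assumption~\ref{ass:upperbounds} and the fact that $J^*$ is attained somewhere, $\overline J\to J^*$ as the boxes around a global minimizer shrink (the base point $\g u$ of a tiny box around $\g w^\star$ has $J_{\rm SW}(\g u)\to J_{\rm SW}(\g w^\star)=J^*$ by continuity of $J_{\rm SW}$). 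Combining, $\overline J - \underline J \to J^* - J^* = 0$ while $\overline J\to J^*>0$ by Assumption~\ref{ass:globalpositive}, so $(\overline J-\underline J)/\overline J\to 0$ and the stopping criterion is met after finitely many iterations. The only genuine subtlety, as noted, is establishing that the classification does become constant on sufficiently small boxes reaching the front of the queue; I would phrase this via continuity of the functions $\g w_j\mapsto e_i^2(\g w_j)$ and the closed/open conditions in Lemma~\ref{lem:constantclassif}, treating a point that is forever in $I_0$ as contributing a positive error to every lower bound (so it cannot prevent $\overline J-\underline J$ from shrinking).
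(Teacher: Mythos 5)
There is a genuine gap, and it is exactly the one you flag yourself: your argument is built around comparing $\underline{J}(B)$ with $\min_{\g w\in B}J_{\rm SW}(\g w)$, which forces you to prove that the classification becomes constant ($I_0(B)=\emptyset$) on all sufficiently small boxes that reach the front of the queue. That claim is false in general: if at the limit point two modes give exactly equal errors for some data point $i$, then for every box of positive size containing that point the strict/non-strict conditions of Lemma~\ref{lem:constantclassif} fail and $i$ stays in $I_0(B^{(m)})$ forever. Your fallback (``a persistent tie contributes a strictly positive amount that the lower bound also captures'') is not an argument, and Assumption~\ref{ass:globalpositive} does not exclude ties — it only says the optimal cost is positive. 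The paper avoids the problem entirely by choosing a different comparison target: it shows $\underline{J}(B)\to J_{\rm SW}(\g u)$, the cost at the box \emph{base point}, as the boxes shrink. For $i\in I_0(B)$ the pointwise term $\min_{j\in\Q}\{(e_i^U(B_j))_+^2+(e_i^L(B_j))_-^2\}$ converges (via $L_i,U_i\to 0$) to $\min_{j\in\Q}e_i^2(\g u_j)$, which is exactly the contribution of point $i$ to $J_{\rm SW}(\g u)$; for $i\in I_j(B)$, $j\geq 1$, the constrained least-squares term converges to $e_i^2(\g u_j)=\min_k e_i^2(\g u_k)$ by the definition of $I_j(B)$. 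So no constant-classification requirement is needed, and Assumption~\ref{ass:upperbounds} immediately gives $\overline{J}(B)-\underline{J}(B)\leq J_{\rm SW}(\g u)-\underline{J}(B)\to 0$.

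Your final assembly also leans on an unproved claim: that $\overline{J}\to J^*$ because ``boxes around a global minimizer shrink.'' Best-first selection (pick the box with smallest lower bound) does not obviously refine the boxes containing a global minimizer, so this needs justification you do not supply; and the limit bookkeeping around it is muddled (you notice mid-proof that one inequality points the wrong way). The paper needs none of this: it only requires that at some finite iteration $T$ the \emph{current} box $B$ satisfies $\overline{J}(B)-\underline{J}(B)\leq\epsilon$ with $\epsilon\in(0,TOL\cdot J^*]$, and then uses $\overline{J}\geq J^*$ only in the denominator to get $(\overline{J}-\underline{J})/\overline{J}\leq\epsilon/\overline{J}\leq TOL$. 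If you redirect your estimate from $\min_{\g w\in B}J_{\rm SW}(\g w)$ to $J_{\rm SW}(\g u)$ and drop the $\overline{J}\to J^*$ step, your proof collapses onto the paper's and the tie obstacle disappears.
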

\begin{proof}
Let $t$ denote the iteration counter in Algorithm~\ref{alg:bb}. 
The splitting rule detailed in Sect.~\ref{sec:switchedbb} guarantees that the side lengths of the boxes decrease towards 0, i.e., $\g v - \g u\xrightarrow{t\to\infty} \g 0$ and $B\xrightarrow{t\to\infty}\{\g u\}$, for all active boxes $B$ with lower bound $\underline{J}(B)$ smaller than the upper bound $\overline{J}$. Then, by their definitions in~\eqref{eq:defLU}, this implies the convergence of $L_i(B_j)$ and $U_i(B_j)$ towards zero for all data index $i\in I$ and mode $j\in\Q$. Recalling~\eqref{eq:defeiLU}, this leads to $e_i^L(B_j)\xrightarrow{t\to\infty} e_i(\g u_j)$ and $e_i^U(B_j)\xrightarrow{t\to\infty} e_i(\g u_j)$. Hence, by the definition of $\underline{J}(B)$ in Lemma~\ref{lem:lbclassif}, we obtain for any active $B$ that
\begin{align}
	\underline{J}(B) \xrightarrow{t\to\infty} &\sum_{i\in I_0(B)} \min_{j\in\Q}  e^2_i(\g u_j)   + \sum_{j \in \Q} \min_{\g w_j \in B_j} \sum_{i\in I_j(B)} e^2_i(\g w_j)\nonumber\\
	\xrightarrow{t\to\infty} &  \sum_{i\in I_0(B)} \min_{j\in\Q} e^2_i(\g u_j) + \sum_{j \in \Q} \sum_{i\in I_j(B)} e^2_i(\g u_j).
\end{align}
By the definition of the index sets $I_j(B)$, $j=0,\dots,n$, and since each data index $i\in I$ is exactly in one and only one of these sets, the right-hand side above equals $\sum_{i\in I} \min_{j\in\Q} e^2_i(\g u_j)$ and we have
\begin{equation}
	\underline{J}(B) \xrightarrow{t\to\infty} \ J_{\rm SW}(\g u) . 
\end{equation}
Thus, for all $\epsilon>0$, there is a finite iteration number $T$ such that for all subsequent iterations $t\geq T$ and all boxes $B$ in the list of active boxes $\mathcal{B}$, 
\begin{equation}
	\epsilon \geq J_{\rm SW}(\g u) - \underline{J}(B) \geq  \overline{J}(B) - \underline{J}(B) ,
\end{equation}
where the second inequality is due to Assumption~\ref{ass:upperbounds}. 
In particular, consider some $\epsilon \in (0, TOL \cdot J^*]$ (such an $\epsilon$ exists by Assumption~\ref{ass:globalpositive}) and let $B$ be the current box at iteration $T$. Recall that Algorithm~\ref{alg:bb} is such that, at this iteration, $\overline{J}\leq \overline{J}(B)$ and $\underline{J} = \underline{J}(B)$. Then, the fact that $\overline{J}$ is an upper bound on $J^*$ ensures that the stopping criterion is met:
\begin{equation}
	\frac{\overline{J} - \underline{J} }{\overline{J} } \leq \frac{\overline{J}(B) - \underline{J}(B) }{ \overline{J} } \leq \frac{\epsilon }{\overline{J}} \leq \frac{ TOL \cdot J^*}{ \overline{J}} \leq TOL ,
\end{equation} 
and the algorithm stops at iteration $T$.
\end{proof}

\begin{remark}
The worst-case time complexity of the branch-and-bound approach described above remains exponential in the number of variables, $nd$, as is naturally expected from the NP-hardness of Problem~\eqref{eq:minmin} \cite{Lauer15b}. However, in practice, the average time complexity can be much lower thanks to the rapid contraction of the search space offered by the bounding scheme. In the product $nd$, the main parameter influencing the computing time is the number of modes, $n$. Indeed, increasing $n$ does not only increase the number of splits required to obtain small boxes with small values of $L_i(B_j)$ and $U_i(B_j)$ as for $d$, but also affects the quality of the lower bounds involving the operation $\min_{j\in\Q}$. Therefore, for large $n$, the iterative approach described in the next section can be more suitable.
\end{remark}

\section{Bounded-error estimation} 
\label{sec:boundederror}

Consider now the problem of estimating both the minimal number $n$ and the collection of $n$ parameter vectors $\{\g w_j\}_{j=1}^n$ such that, given a threshold $\epsilon$ on the error, 
\begin{equation}\label{eq:boundederror}
	\min_{j\in\{1,\dots,n\}} | y_i - \g w_j^T \g x_i| \leq \epsilon, \quad i=1,\dots, N.
\end{equation}
This setting has a long history in the hybrid system identification literature \cite{Bemporad05,Bako11}. 
In this framework, the number of models is not fixed a priori but estimated in order to satisfy the bound on the error. It is shown in \cite{Amaldi02} that this problem is NP-hard and standard approaches work in a greedy manner by estimating one model at each iteration until the bounded-error constraint is satisfied for all points. 
After estimating the $j$th model, the data correctly approximated in the sense of \eqref{eq:boundederror} are removed from the data set before proceeding with the next iteration. 
In \cite{Bemporad05}, the $j$th model is defined as the one correctly approximating the maximum number of points and is estimated by finding the feasible subsystem of inequalities~\eqref{eq:boundederror} of maximum cardinality (the MAX FS problem). However, finding such a subsystem is itself shown to be NP-hard in \cite{Amaldi95} and \cite{Bemporad05} has to rely on a suboptimal heuristic. 

Focusing on the noiseless case, \cite{Bako11} poses this problem as a sparse optimization one: the model yielding a perfect fit of the maximum number of points is the one that yields the sparsest error vector $\g e_j = \g y_j - \g X_j \g w_j$ (with $\g X_j$ and $\g y_j$ containing the remaining data at the $j$th iteration). Though sparse optimization problems are NP-hard in general \cite{Natarajan95}, the advantage of this point of view is that guarantees of convergence of heuristics (such as the $\ell_1$-norm based convex relaxation) to the exact solution can be obtained \cite{Foucart13}. However, in the noisy case, $\g e_j$ is not sparse anymore but ``compressible" and the guarantees become weaker. Then, the method of \cite{Bako11} basically behaves as a robust estimator 
based on the $\ell_1$-loss when considering points from the non-dominant mode as outliers \cite{Bako16}. 

The two points of view from \cite{Bemporad05} and \cite{Bako11} can be united by considering that a vector is sparse when a large fraction of its entries are not larger than a given threshold~$\epsilon$. This leads to the minimization of the loss function $\ell_{0,\epsilon}(e) = \I{|e| > \epsilon}$, plotted on the left of Fig.~\ref{fig:losses}, instead of the $\ell_0$-pseudo norm of $\g e_j$. This minimization is directly equivalent to the MAX FS problem of \cite{Bemporad05}, while the setting of \cite{Bako11} is recovered with $\epsilon=0$. 
By generalizing these ideas, we also consider the minimization of saturated loss functions, 
\begin{equation}
	\forall p\in\{1,2\}, \epsilon>0,\quad \ell_{p,\epsilon}(e) = (\min \{|e|,\ \epsilon \} )^p ,
\end{equation}
which yield a nonzero loss for $0<e<\epsilon$ (see the middle and right plots of Fig.~\ref{fig:losses}). 
Indeed, depending on the noise model, a better alternative might be to not only obtain error vectors with many small entries, but also to minimize a standard loss over these small entries, such as a squared error or absolute deviation. 

\begin{figure}
\centering
\includegraphics[width=0.32\linewidth]{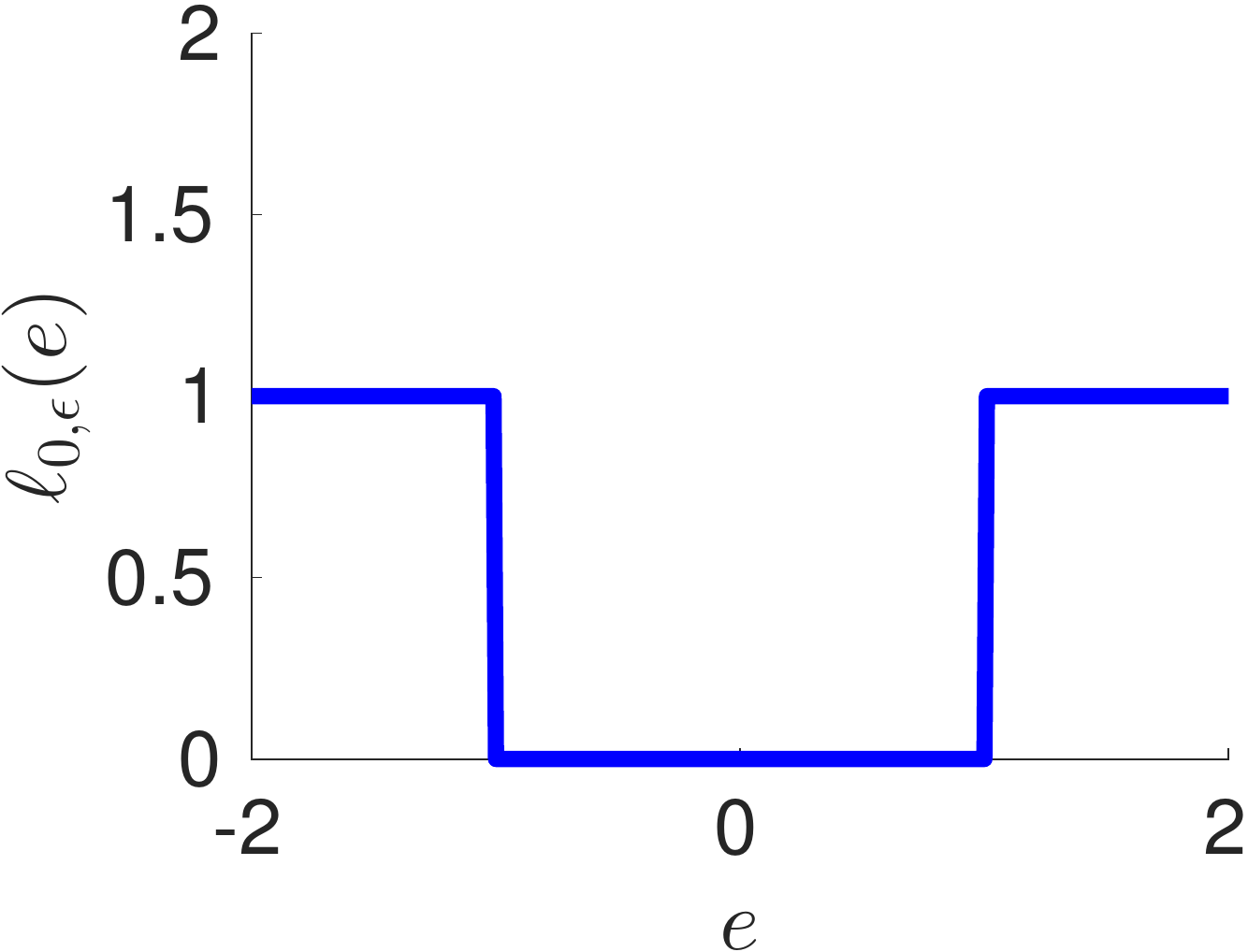}
\includegraphics[width=0.32\linewidth]{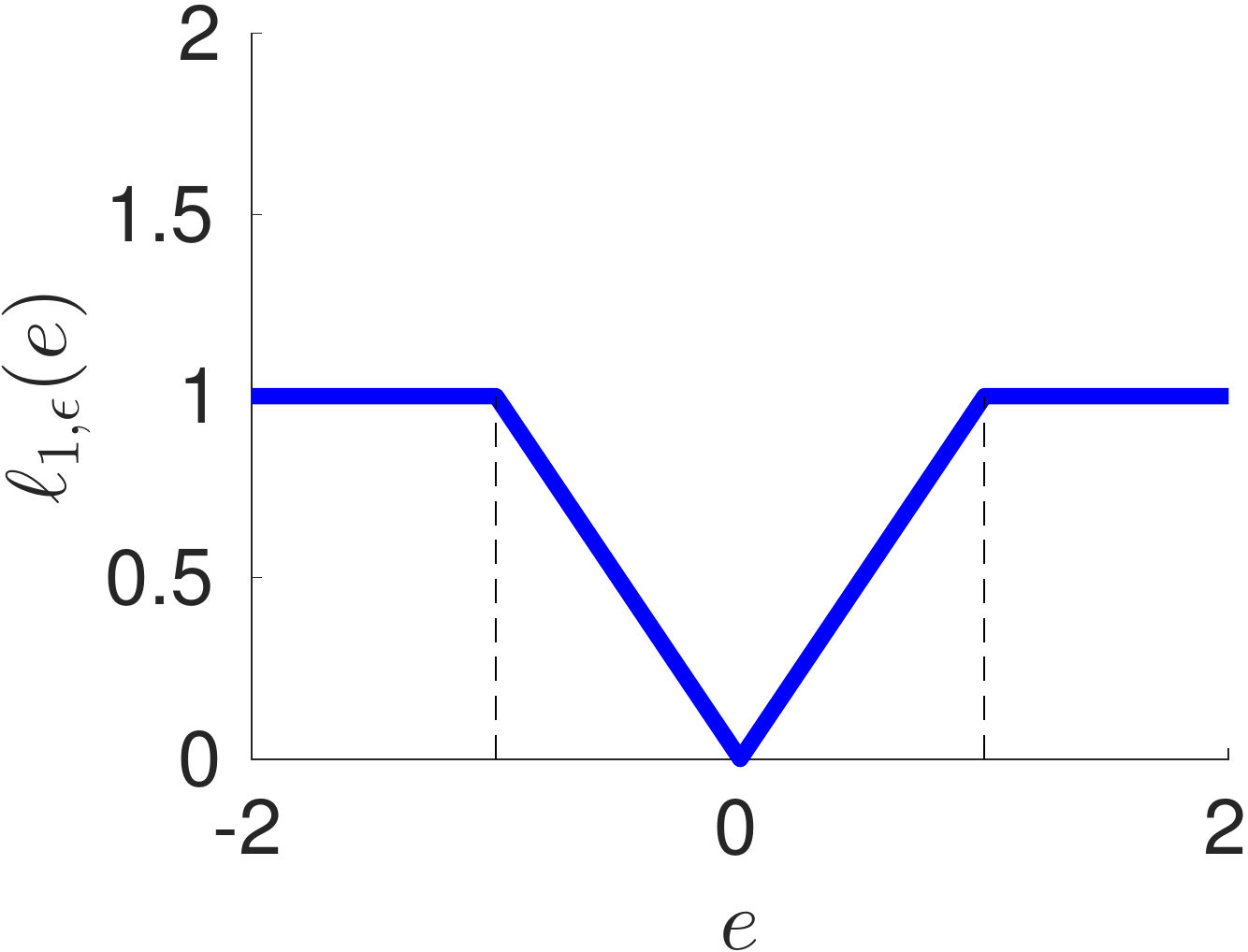}
\includegraphics[width=0.32\linewidth]{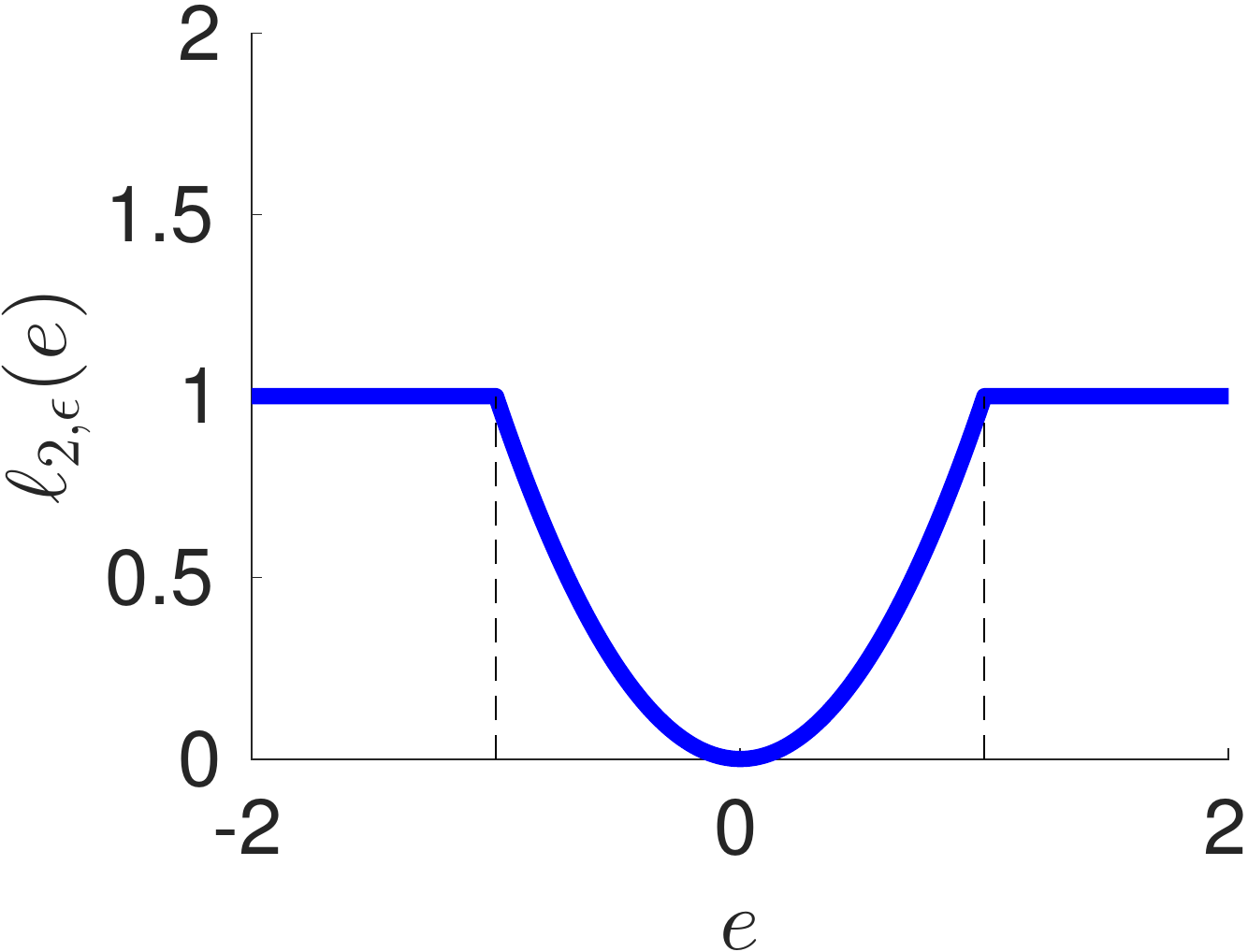}
\caption{Saturated loss functions $\ell_{p,\epsilon}(e)$ plotted for $\epsilon=1$ with, from left to right, $p=0$, $p=1$ and $p=2$.\label{fig:losses}}
\end{figure}

Overall, the considered bounded-error approach relies on iteratively solving for $j=1,2,\dots$ the nonconvex optimization problem
\begin{align}\label{eq:Jbe}
	&\min_{\g w_j \in B\subset \R^d} J_{\rm BE}^p(\g w_j),\\
	&\mbox{with } J_{\rm BE}^p(\g w_j)  = \sum_{i \in I_j} \min \left\{ |y_i - \g w_j^T \g x_i|^p ,\ \epsilon^p \right\}\nonumber
\end{align}
or
\begin{align}\label{eq:Jbe0}
	&\min_{\g w_j \in B\subset \R^d} J_{\rm BE}^0(\g w_j),\\
	&\mbox{with } J_{\rm BE}^0(\g w_j)  = \sum_{i\in I_j} \I{|y_i - \g w_j^T \g x_i| > \epsilon} ,\nonumber
\end{align}
where $I_1 = \{1,\dots,N\}$ and $I_j=\{ i\in I_{j-1} : |y_i - \g w_j^T \g x_i| > \epsilon \} $, until the set of indexes $I_j$ of points left in the data set and unassigned to a mode becomes empty.  

Note that these problems are also of interest for the robust estimation of a single (non-hybrid) linear model in the presence of outliers, as will be considered in the examples of Sect.~\ref{sec:exprobust}--\ref{sec:expsparse} (see also \cite{Bako16} for an analysis of the sparse optimization method of \cite{Bako11} in this context). 

\subsection{Branch-and-bound approach}

We now focus on the minimization of $J_{\rm BE}^p$ in the cases $p=2$ and $p=0$. 
In devising a branch-and-bound algorithm to solve \eqref{eq:Jbe} and~\eqref{eq:Jbe0}, we will use techniques developed in Section~\ref{sec:switched} for switching regression. Indeed, the bounded-error cost functions are closely related to $J_{\rm SW}$ and their minimization can be seen as a switching regression problem with two modes, one of which having a model with constant error equal to $\epsilon^2$ for $J_{\rm BE}^2$ or $1$ for $J_{\rm BE}^0$.

Upper bounds on the bounded-error cost functions are simply computed as the cost function value at the base point $\g u$ of the current box $B=[\g u,\g v]\subset\R^d$. In the case $p=2$, we also periodically use the heuristic described below with an initialization at the center of the box. 

Since in bounded-error estimation, each optimization problem, either~\eqref{eq:Jbe} or~\eqref{eq:Jbe0}, aims at the estimation of a single parameter vector, there is no symmetry that needs to be broken with additional constraints as we did for switching regression with~\eqref{eq:switchedsymmetry}. Therefore, the splitting rule remains simple: we split in the middle of the longest side of the box with~\eqref{eq:split1}--\eqref{eq:split3}. 

\subsection{Heuristic for bounded-error estimation}

We introduce a new heuristic inspired by the $k$-LinReg algorithm \cite{Lauer13a} to minimize the bounded-error cost with the $\ell_{2,\epsilon}$ loss. Starting at iteration $t=0$ from a parameter vector $\g w^0$ and a data set $\{(\g x_i, y_i)\}_{i\in I_j}$, the algorithm alternates between a classification step and a least squares regression as follows.
\begin{enumerate}
	\item Identify the set of points satisfying the bounded-error criterion:
	\begin{equation}
		I_B(t) = \left\{i\in I_j : (y_i - \g x_i^T \g w_j^t)^2 \leq \epsilon^2 \right\} .
	\end{equation}
	\item Update the parameter vector with 
	\begin{equation}\label{eq:boudnederrorheuristic}
		\g w_j^{t+1} = \argmin_{\g w\in\R^d} \sum_{i\in I_B(t)} (y_i- \g w^T \g x_i)^2.
	\end{equation}
	\item Set $t\leftarrow t+1$ and repeat from step (1) until convergence. 
\end{enumerate}

The following shows that this algorithm is a descent method for problem~\eqref{eq:Jbe} with $p=2$.
\begin{proposition}
The algorithm above monotonically decreases the bounded-error cost function with $\ell_{2,\epsilon}$ loss, $J_{\rm BE}^2$. 
\end{proposition}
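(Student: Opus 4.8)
The plan is to show that the two steps of the algorithm each do not increase $J_{\rm BE}^2$, so that the sequence $J_{\rm BE}^2(\g w_j^t)$ is nonincreasing. The key observation is that the saturated loss admits the pointwise identity
\begin{equation}\label{eq:satident}
	\min\{e^2,\epsilon^2\} = \min_{\delta\in\{0,1\}}\ \bigl[\,\delta\, e^2 + (1-\delta)\,\epsilon^2\,\bigr],
\end{equation}
i.e. at each point we choose either to pay the true squared error (if we "keep" the point) or the constant $\epsilon^2$ (if we "discard" it), and the minimum over this binary choice is exactly the saturated loss. Summing over $i\in I_j$, $J_{\rm BE}^2(\g w)$ equals the minimum over all subsets $S\subseteq I_j$ of $\Phi(\g w,S) := \sum_{i\in S}(y_i-\g w^T\g x_i)^2 + |I_j\setminus S|\,\epsilon^2$, and the minimizing $S$ for a fixed $\g w$ is precisely the set $\{i : (y_i-\g w^T\g x_i)^2\le\epsilon^2\}$ (ties broken to keep the point, which is harmless). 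This reframes the algorithm as block-coordinate descent on $\Phi(\g w,S)$: step (1) minimizes over $S$ with $\g w$ fixed, and step (2) minimizes over $\g w$ with $S=I_B(t)$ fixed.

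\textbf{Key steps.} First I would state \eqref{eq:satident} and the induced identity $J_{\rm BE}^2(\g w) = \min_{S\subseteq I_j}\Phi(\g w,S) = \Phi\bigl(\g w, I_B(\g w)\bigr)$ where $I_B(\g w)=\{i\in I_j:(y_i-\g w^T\g x_i)^2\le\epsilon^2\}$; this is immediate from the fact that including index $i$ in $S$ changes the contribution from $\epsilon^2$ to $(y_i-\g w^T\g x_i)^2$, which is an improvement exactly when the latter is no larger. Second, at iteration $t$ I would chain the inequalities:
\begin{align}
	J_{\rm BE}^2(\g w_j^{t+1}) &= \Phi\bigl(\g w_j^{t+1}, I_B(t+1)\bigr) \le \Phi\bigl(\g w_j^{t+1}, I_B(t)\bigr) \nonumber\\
	&= \sum_{i\in I_B(t)}(y_i-(\g w_j^{t+1})^T\g x_i)^2 + |I_j\setminus I_B(t)|\,\epsilon^2 \nonumber\\
	&\le \sum_{i\in I_B(t)}(y_i-(\g w_j^{t})^T\g x_i)^2 + |I_j\setminus I_B(t)|\,\epsilon^2 \nonumber\\
	&= \Phi\bigl(\g w_j^{t}, I_B(t)\bigr) = J_{\rm BE}^2(\g w_j^{t}).
\end{align}
The first inequality uses the minimality of $I_B(t+1)=I_B(\g w_j^{t+1})$ over subsets $S$ (step (1) optimality). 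The second inequality uses that $\g w_j^{t+1}$ is by \eqref{eq:boudnederrorheuristic} a minimizer of $\sum_{i\in I_B(t)}(y_i-\g w^T\g x_i)^2$, so it does no worse than $\g w_j^{t}$ (step (2) optimality); the constant term $|I_j\setminus I_B(t)|\,\epsilon^2$ is unaffected. The last equality re-applies the identity $J_{\rm BE}^2(\g w_j^t)=\Phi(\g w_j^t,I_B(\g w_j^t))$ together with $I_B(t)=I_B(\g w_j^t)$ by definition of step (1).

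\textbf{Main obstacle.} There is no serious obstacle; the argument is the standard block-coordinate-descent / Lloyd-type monotonicity proof, and the only point needing a word of care is the tie-breaking in step (1): when $(y_i-\g w^T\g x_i)^2=\epsilon^2$ exactly, the definition of $I_B(t)$ includes $i$, but both choices give the same $\Phi$ value, so $I_B(t)$ is still a valid minimizer of $\Phi(\g w_j^t,\cdot)$ and the chain goes through unchanged. One should also note that $\g w_j^{t+1}$ need not be unique if $\{\g x_i\}_{i\in I_B(t)}$ fails to span $\R^d$, but any minimizer of \eqref{eq:boudnederrorheuristic} suffices for the inequality, so the statement ("monotonically decreases", in the weak sense of nonincreasing) holds regardless. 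I would close by remarking that this establishes the descent property claimed; strict decrease until a fixed point, and hence termination, would follow from the finiteness of the set of attainable index sets $I_B$, but that is not needed for the proposition as stated.
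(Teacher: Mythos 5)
Your proof is correct, but it is organized differently from the paper's. The paper partitions $I_j$ into the three disjoint sets $I_B(t)$, $I_B(t+1)\setminus I_B(t)$ and $I_j\setminus(I_B(t)\cup I_B(t+1))$ and bounds the difference $J_{\rm BE}^2(\g w_j^{t+1})-J_{\rm BE}^2(\g w_j^t)$ termwise: the sum over $I_B(t)$ is nonpositive by the least-squares optimality of $\g w_j^{t+1}$ (together with the fact that the saturated loss never exceeds the squared loss, while the two coincide on $I_B(t)$ at $\g w_j^t$), the sum over $I_B(t+1)\setminus I_B(t)$ is nonpositive because those points saturate the loss at $\g w_j^t$, and the last sum vanishes since both estimates saturate there. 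You instead introduce the variational representation $\min\{e^2,\epsilon^2\}=\min_{\delta\in\{0,1\}}[\delta e^2+(1-\delta)\epsilon^2]$, so that $J_{\rm BE}^2(\g w)=\min_{S\subseteq I_j}\Phi(\g w,S)$ with $\Phi(\g w,S)=\sum_{i\in S}e_i^2(\g w)+|I_j\setminus S|\epsilon^2$, and read the algorithm as exact block-coordinate descent on $\Phi$: step (1) minimizes over $S$, step (2) over $\g w$. This buys a cleaner chain of three inequalities, automatically absorbs the slight mismatch between the saturated loss appearing in $J_{\rm BE}^2$ and the unsaturated least-squares objective minimized in the update (which in the paper's argument requires the extra observation on $I_B(t)$), and makes explicit the Lloyd-type structure that also suggests finite termination; the paper's route is more direct and stays entirely at the level of the original cost. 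Your remarks on tie-breaking at $e_i^2=\epsilon^2$, on non-uniqueness of the least-squares minimizer, and on the weak (nonincreasing) sense of monotonicity are all consistent with what the proposition actually claims.
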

\begin{proof}
Partitioning the set of all indexes into three disjoint subsets as
\begin{equation}
	I_j = I_B(t) \cup \left(I_B(t+1) \setminus I_B(t)\right) \cup \left( I_j \setminus(I_B(t)\cup I_B(t+1)\right),
\end{equation}
we write the difference between the cost function values at the new estimate, $\g w_j^{t+1}$, and at the old one, $\g w_j^t$, as
\begin{align}
	J_{\rm BE}^2(\g w_j^{t+1}) - J_{\rm BE}^2(\g w_j^t) 
	=& \sum_{i \in I_B(t) } \ell_{2,\epsilon} ( y_i -  \g x_i^T \g w_j^{t+1} ) - \ell_{2,\epsilon} ( y_i -  \g x_i^T \g w_j^{t} ) \nonumber\\
	& + \sum_{i \in I_B(t+1) \setminus I_B(t) } \ell_{2,\epsilon} ( y_i -  \g x_i^T \g w_j^{t+1} ) - \ell_{2,\epsilon} ( y_i -  \g x_i^T \g w_j^{t} ) \nonumber\\
	&+ \sum_{ I_j \setminus(I_B(t) \cup I_B(t+1) ) }\ell_{2,\epsilon} ( y_i -  \g x_i^T \g w_j^{t+1} ) - \ell_{2,\epsilon} ( y_i -  \g x_i^T \g w_j^{t} ).\nonumber
\end{align}
The first sum is negative since the new estimate $\g w_j^{t+1}$ is precisely defined in~\eqref{eq:boudnederrorheuristic} as the minimizer of the loss over $I_B(t)$. The second sum is also negative since, for any $i \in I_B(t+1) \setminus I_B(t)$, the  point $(\g x_i, y_i)$ is within the bounded-error tolerance for the new estimate but saturates the loss for the old one:
\begin{equation}
	\ell_{2,\epsilon} ( y_i -  \g x_i^T \g w_j^{t+1} ) \leq \epsilon^2 = \ell_{2,\epsilon} ( y_i -  \g x_i^T \g w_j^{t} ).
\end{equation} 
Finally, the third sum is zero because for $i\notin I_B(t) \cup I_B(t+1) $, the loss is saturated for both estimates: 
\begin{equation}
	\ell_{2,\epsilon} ( y_i -  \g x_i^T \g w_j^{t+1} ) = \ell_{2,\epsilon} ( y_i -  \g x_i^T \g w_j^{t} ) = \epsilon^2.
\end{equation}
Therefore, $J_{\rm BE}^2(\g w_j^{t+1}) \leq J_{\rm BE}^2(\g w_j^t)$, and the cost function value can only decrease.
\end{proof}

To initialize the first upper bound in the global optimization Algorithm~\ref{alg:bb}, we use 100 runs of the algorithm above with random initializations.

\subsection{Lower bounds}

Following the path of Sect.~\ref{sec:switched}, we derive a number of lower bounds of increasing tightness. 

\begin{lemma}\label{lem:pwebe}
Given a box $B = [\g u,\g v]\subset \R^d$, with the notations of Sect.~\ref{sec:switched} and in particular~\eqref{eq:defeiLU},
\begin{equation}
	\underline{J}(B) = \sum_{i\in I_j} \min \left\{ \left(e_i^U(B) \right)_+^2 + \left(e_i^L(B)\right)_-^2  ,\ \epsilon^2 \right\}
\end{equation}
is a lower bound on $\min_{\g w_j \in B} J_{\rm BE}^2(\g w_j)$.
\end{lemma}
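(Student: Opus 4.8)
The plan is to mimic the proof of Lemma~\ref{lem:pwe} but specialized to a single parameter vector and the saturated squared loss. First I would observe that for any fixed $\g a\in B$, the loss at point $i$ is $\ell_{2,\epsilon}(y_i-\g a^T\g x_i)=\min\{e_i^2(\g a),\epsilon^2\}\geq \min\{\min_{\g w_j\in B}e_i^2(\g w_j),\epsilon^2\}$, since replacing $e_i^2(\g a)$ by its minimum over $B$ can only decrease the first term inside the $\min$, hence the whole quantity. Summing over $i\in I_j$ gives $J_{\rm BE}^2(\g a)\geq\sum_{i\in I_j}\min\{\min_{\g w_j\in B}e_i^2(\g w_j),\epsilon^2\}$, and since this holds for every $\g a\in B$ it holds for the minimizer, yielding $\min_{\g w_j\in B}J_{\rm BE}^2(\g w_j)\geq\sum_{i\in I_j}\min\{\min_{\g w_j\in B}e_i^2(\g w_j),\epsilon^2\}$.

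Next I would invoke the machinery already established in Section~\ref{sec:switched}. Lemma~\ref{lem:bracketingdotprod}, applied with the single box $B$ in place of $B_j$, together with the definitions~\eqref{eq:defeiLU} of $e_i^L(B)$ and $e_i^U(B)$, gives the bracketing $\min_{\g w_j\in B}e_i(\g w_j)=e_i^U(B)$ and $\max_{\g w_j\in B}e_i(\g w_j)=e_i^L(B)$ exactly as in~\eqref{eq:bracketingeij}. From this, equation~\eqref{eq:mineij} (with $B$ substituted for $B_j$) gives $\min_{\g w_j\in B}e_i^2(\g w_j)=\left(e_i^U(B)\right)_+^2+\left(e_i^L(B)\right)_-^2$. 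Substituting this identity into the bound from the previous paragraph produces exactly the claimed expression for $\underline{J}(B)$, completing the proof.

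The argument is essentially routine given the earlier lemmas; the only point requiring a moment's care is the monotonicity step at the start, namely that $a\mapsto\min\{a,\epsilon^2\}$ is nondecreasing so that lower bounding $e_i^2(\g a)$ by $\min_{\g w_j\in B}e_i^2(\g w_j)$ is legitimate inside the saturation. There is no real obstacle here: the saturated loss is handled termwise and independently across $i$, and unlike the full switching cost there is no inner $\min_{j\in\Q}$ to worry about, so the decomposition is immediate. A one-line remark could also note that this is the $J_{\rm BE}^2$ analogue of Lemma~\ref{lem:pwe}, consistent with the observation that bounded-error estimation is a two-mode switching problem with one mode of constant error $\epsilon^2$.
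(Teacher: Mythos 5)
Your proof is correct and follows essentially the same route as the paper's: the pointwise monotonicity inequality $\min\{e_i^2(\g a),\epsilon^2\}\geq\min\{\min_{\g w_j\in B}e_i^2(\g w_j),\epsilon^2\}$, summation over $i\in I_j$, and substitution of~\eqref{eq:mineij} to express $\min_{\g w_j\in B}e_i^2(\g w_j)$ via $e_i^U(B)$ and $e_i^L(B)$. No gaps; the only step requiring care (exchanging the saturation with the minimization over the box) is handled exactly as in the paper.
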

\begin{proof}
We proceed as in the beginning of the proof of Lemma~\ref{lem:pwe}: since for any $\g a \in B$, 
\begin{align}
	\min \left\{ e_i^2(\g a), \epsilon^2\right\} 
		&\geq \min_{\g w_j\in B}\min\left\{e_i^2(\g w_j), \epsilon^2\right\}
		= \min\left\{\min_{\g w_j\in B}e_i^2(\g w_j), \epsilon^2\right\},
\end{align}
we have 
\begin{equation}\label{eq:lem:pwebe:2}
	\min_{\g w_j \in B} \sum_{i\in I_j} \min\{e_i^2(\g w_j),\epsilon^2\} \geq \sum_{i\in I_j} \min \left\{ \min_{\g w_j \in B} e_i^2(\g w_j) , \epsilon^2 \right\}.
\end{equation}
Then, noticing that the sum in the left-hand side is $J_{\rm BE}^2(\g w_j)$ and introducing~\eqref{eq:mineij} in the right-hand side completes the proof.
\end{proof}

Let us define the index sets 
\begin{align}
	&I_1(B) = \left\{i \in I_j: \max_{\g w\in[\g u,\g v]} e_i^2 ( \g w) \leq \epsilon^2 \right\},\\ 
	&I_2(B) = \left\{i \in I_j : \min_{\g w\in[\g u,\g v]} e_i^2 ( \g w) > \epsilon^2 \right\}
\end{align}
and 
\begin{equation}
	I_0(B) = I_j \setminus (I_1(B)\cup I_2(B)).
\end{equation}
These sets can be easily determined by using~\eqref{eq:maxeij} for $I_1(B)$ and~\eqref{eq:mineij} for $I_2(B)$. Then, we obtain the following lower bound.
\begin{lemma}\label{lem:lbbe}
Given a box $B = [\g u,\g v]\subset \R^d$ and the notations above, 
\begin{align}
	\underline{J}(B)=&\sum_{i\in I_0(B)} \min \left\{ \left(e_i^U(B)\right)_+^2 + \left( e_i^L(B)\right)_-^2  ,\ \epsilon^2 \right\} + \min_{\g w_j\in B} \sum_{i\in I_1(B)} e_i^2 (\g w_j)  + |I_2(B)| \epsilon^2
\end{align}
is a lower bound on $\min_{\g w_j \in B} J_{\rm BE}^2(\g w_j)$.
\end{lemma}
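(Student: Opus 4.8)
The plan is to mimic the proof of Lemma~\ref{lem:lbclassif}, but adapted to the two-mode structure implicit in bounded-error estimation where the ``second mode'' has a constant loss $\epsilon^2$. First I would partition the index set $I_j$ into the three disjoint subsets $I_0(B)$, $I_1(B)$, $I_2(B)$ and rewrite $J_{\rm BE}^2(\g w_j)$ as a sum over these three parts. The key observation is that for $i\in I_1(B)$ the loss is never saturated on $B$ (since $e_i^2(\g w_j)\leq\epsilon^2$ for all $\g w_j\in B$), so $\min\{e_i^2(\g w_j),\epsilon^2\}=e_i^2(\g w_j)$ there; symmetrically, for $i\in I_2(B)$ the loss is always saturated, so $\min\{e_i^2(\g w_j),\epsilon^2\}=\epsilon^2$ there, giving a constant contribution of $|I_2(B)|\epsilon^2$ independent of $\g w_j$. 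Thus for all $\g w_j\in B$,
\begin{equation}
	J_{\rm BE}^2(\g w_j) = \sum_{i\in I_0(B)}\min\{e_i^2(\g w_j),\epsilon^2\} + \sum_{i\in I_1(B)} e_i^2(\g w_j) + |I_2(B)|\epsilon^2.
\end{equation}

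Next I would take the minimum over $\g w_j\in B$ on both sides. The constant term passes through unchanged. For the $I_0(B)$ sum, I would lower bound $\min_{\g w_j\in B}$ of a sum by the sum of pointwise minima (exactly as in~\eqref{eq:lbpwe1} and~\eqref{eq:lem:pwebe:2}), and then apply Lemma~\ref{lem:pwebe}'s pointwise calculation: $\min_{\g w_j\in B}\min\{e_i^2(\g w_j),\epsilon^2\} = \min\{(e_i^U(B))_+^2+(e_i^L(B))_-^2,\ \epsilon^2\}$, using~\eqref{eq:mineij}. For the $I_1(B)$ sum, the crucial point is that it is \emph{not} decomposed pointwise: we keep it as a single box-constrained least squares minimization $\min_{\g w_j\in B}\sum_{i\in I_1(B)} e_i^2(\g w_j)$, which is exactly the middle term of the claimed bound. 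Since the minimum of a sum is at least the sum of minima of the individual (here, grouped) terms, combining these three lower bounds yields $\min_{\g w_j\in B} J_{\rm BE}^2(\g w_j) \geq \underline{J}(B)$ as stated.

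The only subtlety — and the main thing to get right — is justifying that for $i\in I_1(B)$ and $i\in I_2(B)$ the saturation behaviour is genuinely constant over the whole box, so that the rewriting of $J_{\rm BE}^2$ holds identically (not just as an inequality) before we take the minimum. This follows directly from the definitions of $I_1(B)$ and $I_2(B)$ together with the bracketing identities: $\max_{\g w\in B} e_i^2(\g w) = \max\{e_i^U(B)^2, e_i^L(B)^2\}$ from~\eqref{eq:maxeij} certifies $e_i^2(\g w_j)\leq\epsilon^2$ throughout $B$ when $i\in I_1(B)$, and $\min_{\g w\in B} e_i^2(\g w) = (e_i^U(B))_+^2+(e_i^L(B))_-^2$ from~\eqref{eq:mineij} certifies $e_i^2(\g w_j)>\epsilon^2$ throughout $B$ when $i\in I_2(B)$. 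With that in hand the rest is a routine assembly. In short, the proof is:

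\begin{proof}
Since $I_j = I_0(B) \cup I_1(B) \cup I_2(B)$ with the three sets disjoint, for any $\g w_j\in B$ we have
\begin{equation}
	J_{\rm BE}^2(\g w_j) = \sum_{i\in I_0(B)}\min\{e_i^2(\g w_j),\epsilon^2\} + \sum_{i\in I_1(B)} \min\{e_i^2(\g w_j),\epsilon^2\} + \sum_{i\in I_2(B)}\min\{e_i^2(\g w_j),\epsilon^2\}.
\end{equation}
For $i\in I_1(B)$, the definition of $I_1(B)$ and~\eqref{eq:maxeij} give $e_i^2(\g w_j)\leq \max_{\g w\in B} e_i^2(\g w)\leq\epsilon^2$ for all $\g w_j\in B$, so $\min\{e_i^2(\g w_j),\epsilon^2\}=e_i^2(\g w_j)$. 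For $i\in I_2(B)$, the definition of $I_2(B)$ and~\eqref{eq:mineij} give $e_i^2(\g w_j)\geq \min_{\g w\in B} e_i^2(\g w)>\epsilon^2$ for all $\g w_j\in B$, so $\min\{e_i^2(\g w_j),\epsilon^2\}=\epsilon^2$. Hence, for all $\g w_j\in B$,
\begin{equation}
	J_{\rm BE}^2(\g w_j) = \sum_{i\in I_0(B)}\min\{e_i^2(\g w_j),\epsilon^2\} + \sum_{i\in I_1(B)} e_i^2(\g w_j) + |I_2(B)|\,\epsilon^2 .
\end{equation}
Minimizing over $\g w_j\in B$ and bounding the minimum of a sum below by the sum of the minima of its parts,
\begin{equation}
	\min_{\g w_j\in B} J_{\rm BE}^2(\g w_j) \geq \sum_{i\in I_0(B)}\min_{\g w_j\in B}\min\{e_i^2(\g w_j),\epsilon^2\} + \min_{\g w_j\in B}\sum_{i\in I_1(B)} e_i^2(\g w_j) + |I_2(B)|\,\epsilon^2 .
\end{equation}
For each $i\in I_0(B)$, interchanging the two minima and using~\eqref{eq:mineij},
\begin{equation}
	\min_{\g w_j\in B}\min\{e_i^2(\g w_j),\epsilon^2\} = \min\left\{\min_{\g w_j\in B} e_i^2(\g w_j),\ \epsilon^2\right\} = \min\left\{\left(e_i^U(B)\right)_+^2 + \left(e_i^L(B)\right)_-^2,\ \epsilon^2\right\},
\end{equation}
which, substituted above, gives exactly $\underline{J}(B)$.
\end{proof}
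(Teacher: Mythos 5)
Your proposal is correct and follows essentially the same route as the paper: decompose $J_{\rm BE}^2$ over the disjoint sets $I_0(B)$, $I_1(B)$, $I_2(B)$, use the definitions of these sets to simplify the $I_1$ and $I_2$ contributions, lower bound the minimum of the sum by the sum of (grouped) minima, and handle the $I_0(B)$ term pointwise via the argument of Lemma~\ref{lem:pwebe} together with~\eqref{eq:mineij}. The only cosmetic difference is that the paper simply invokes Lemma~\ref{lem:pwebe} restricted to $I_0(B)$, whereas you re-derive that pointwise bound inline, which amounts to the same thing.
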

\begin{proof}
Since each data index $i\in I_j$ is exactly in one and only one of the sets $I_0(B)$, $I_1(B)$ and $I_2(B)$, the cost function can be decomposed as 
\begin{equation}
	J_{\rm BE}^2(\g w_j) = \sum_{k=0}^2 \sum_{i\in I_k(B)} \min\left\{e_i^2(\g w_j),\ \epsilon\right\}.
\end{equation}
By definition of the index sets, this yields, for any $\g w_j\in B$, 
\begin{align}
	J_{\rm BE}^2(\g w_j)=\sum_{i\in I_0(B)}\min\left\{e_i^2(\g w_j),\epsilon\right\} +\sum_{i\in I_1(B)} e_i^2(\g w_j) + |I_2(B)| \epsilon^2.
\end{align}
We can use the fact that Lemma~\ref{lem:pwebe} holds similarly for a sum over any index set instead of $I_j$, and in particular for $I_0(B)$, to lower bound the first term. Then, the result follows since the minimum of the sum of three terms is larger than or equal to the sum of the minimum of each term.
\end{proof}

As for the switching regression case of Section~\ref{sec:switched}, the only demanding task in the lower bound of Lemma~\ref{lem:lbbe} is to solve a box-constrained least squares problem over the points with index in $I_1(B)$. 

Using similar index sets, we also obtain a version of Lemma~\ref{lem:lbbe} for the $\ell_{0,\epsilon}$ loss function.
\begin{lemma}\label{lem:lbbe0}
Given a box $B = [\g u,\g v]\subset \R^d$ and the notations above, $\underline{J}(B) = |I_2(B)|$ is a lower bound on $\min_{\g w_j \in B} J_{\rm BE}^0(\g w_j)$.
\end{lemma}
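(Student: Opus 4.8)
The plan is to mirror the structure of the proof of Lemma~\ref{lem:lbbe}, but specialized to the $\ell_{0,\epsilon}$ loss where everything collapses to counting. First I would recall that, by definition, for every $\g w_j\in B$ each data index $i\in I_j$ lies in exactly one of the three sets $I_0(B)$, $I_1(B)$, $I_2(B)$, so that
\begin{equation}
J_{\rm BE}^0(\g w_j) = \sum_{i\in I_0(B)} \I{e_i^2(\g w_j) > \epsilon^2} + \sum_{i\in I_1(B)} \I{e_i^2(\g w_j) > \epsilon^2} + \sum_{i\in I_2(B)} \I{e_i^2(\g w_j) > \epsilon^2}.
\end{equation}

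The key observation is then to bound each of the three sums from below separately by a constant that does not depend on $\g w_j$. For $i\in I_1(B)$ we have $\max_{\g w\in B} e_i^2(\g w)\leq \epsilon^2$, hence $\I{e_i^2(\g w_j)>\epsilon^2}=0$ for every $\g w_j\in B$; this sum contributes $0$. For $i\in I_2(B)$ we have $\min_{\g w\in B} e_i^2(\g w)>\epsilon^2$, hence $\I{e_i^2(\g w_j)>\epsilon^2}=1$ for every $\g w_j\in B$; this sum contributes exactly $|I_2(B)|$. For $i\in I_0(B)$ the indicator can be either $0$ or $1$ depending on $\g w_j$, so I would simply lower bound that sum by $0$ (which is the trivial pointwise bound $\min_{\g w_j\in B} \I{e_i^2(\g w_j)>\epsilon^2}\geq 0$, consistent with the role played by $I_0$ in the previous lemmas). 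Summing the three lower bounds gives $J_{\rm BE}^0(\g w_j)\geq |I_2(B)|$ for all $\g w_j\in B$, and since this holds in particular at the minimizer over $B$, we conclude $\min_{\g w_j\in B} J_{\rm BE}^0(\g w_j)\geq |I_2(B)|$.

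I would also note explicitly (as the excerpt does just before the statement) that $I_1(B)$ and $I_2(B)$ are effectively computed using~\eqref{eq:maxeij} and~\eqref{eq:mineij}, so that the bound is cheap: $|I_2(B)|$ is just the number of indices $i\in I_j$ for which $\left(e_i^U(B)\right)_+^2+\left(e_i^L(B)\right)_-^2 > \epsilon^2$.

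There is essentially no hard part here; the proof is a direct counting argument and the only thing to be careful about is the tie-breaking at the boundary, i.e.\ making sure the strict versus non-strict inequalities in the definitions of $I_1(B)$ and $I_2(B)$ match the strict inequality $|e|>\epsilon$ inside $\ell_{0,\epsilon}$. Since $I_1(B)$ uses $\max e_i^2\leq\epsilon^2$ (so $e_i^2(\g w_j)\leq\epsilon^2$, giving indicator $0$) and $I_2(B)$ uses $\min e_i^2>\epsilon^2$ (so $e_i^2(\g w_j)>\epsilon^2$, giving indicator $1$), both cases are unambiguous and the argument goes through without any measure-zero caveat. If anything, the mildly delicate point is simply observing that this is the $p=0$ analogue of Lemma~\ref{lem:lbbe} in which the least-squares subproblem over $I_1(B)$ degenerates to the constant $0$ and the $|I_2(B)|\epsilon^2$ term becomes $|I_2(B)|$, so one could even present it as a one-line corollary of that lemma's proof technique.
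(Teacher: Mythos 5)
Your proof is correct and follows exactly the argument the paper intends: the paper states Lemma~\ref{lem:lbbe0} without proof as an immediate $p=0$ analogue of Lemma~\ref{lem:lbbe}, and your counting decomposition over $I_0(B)$, $I_1(B)$, $I_2(B)$ (indicator identically $1$ on $I_2(B)$, trivially bounded below by $0$ elsewhere) is precisely that omitted argument, including the correct handling of the strict inequality and the computation of $|I_2(B)|$ via~\eqref{eq:mineij}.
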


\subsection{Convergence}

As for the switching regression case studied in Sect.~\ref{sec:switchingcvg}, convergence is obtained, for both the $\ell_{2,\epsilon}$ and the $\ell_{0,\epsilon}$ loss functions, from the tightness of the bounds, under the following assumptions. 
\begin{assumption}\label{ass:globalpositivebe}
For $p=2$ (respectively, $p=0$), the global optimum of Problem~\eqref{eq:Jbe} (resp. Problem~\eqref{eq:Jbe0}) is strictly positive:
\begin{equation}
	J^* = \min_{\g w_j\in B_{\rm init}} J_{\rm BE}^p(\g w_j)	> 0.
\end{equation}
\end{assumption}
\begin{assumption}\label{ass:upperboundsbe}
For any $p\in\{0,2\}$, upper bounds in a box $B=[\g u,\g v]$ are computed as $\overline{J}(B) = J_{\rm BE}^p(\g u)$ or such that $\overline{J}(B) \leq J_{\rm BE}^p(\g u)$. 
\end{assumption}

\begin{theorem}\label{thm:boundederrorcvg}
Under Assumptions~\ref{ass:globalpositivebe}--\ref{ass:upperboundsbe}, the branch-and-bound algorithm described above to minimize $J_{\rm BE}^p$ with $p\in\{0,2\}$ and lower bounds $\underline{J}(B)$ computed as in Lemma~\ref{lem:lbbe} for $p=2$ or Lemma~\ref{lem:lbbe0} for $p=0$ converges in a finite number of iterations for any $TOL>0$.
\end{theorem}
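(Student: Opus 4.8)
The plan is to mirror the convergence proof for switching regression (Theorem~\ref{thm:converge}) almost verbatim, since the bounded-error problem is structurally a two-mode switching regression problem in which one ``mode'' has a fixed error. First I would let $t$ denote the iteration counter and invoke the splitting rule of Sect.~\ref{sec:switchedbb} (here the simpler version splitting along the longest side, without the symmetry corrections) to conclude that for every active box $B=[\g u,\g v]$ retained in $\mathcal{B}$, the side lengths shrink to zero, $\g v-\g u\xrightarrow{t\to\infty}\g 0$, so $B\xrightarrow{t\to\infty}\{\g u\}$. By the definitions~\eqref{eq:defLU} this forces $L_i(B)\to 0$ and $U_i(B)\to 0$ for all $i\in I_j$, hence by~\eqref{eq:defeiLU} both $e_i^L(B)\to e_i(\g u)$ and $e_i^U(B)\to e_i(\g u)$.

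Next I would treat the two cases separately but in parallel. For $p=2$, I would analyze the three terms in the lower bound of Lemma~\ref{lem:lbbe}. The $I_0(B)$ sum converges to $\sum_{i\in I_0(B)}\min\{e_i^2(\g u),\epsilon^2\}$; the $|I_2(B)|\epsilon^2$ term is already exact; and the box-constrained least squares term $\min_{\g w_j\in B}\sum_{i\in I_1(B)}e_i^2(\g w_j)$ converges to $\sum_{i\in I_1(B)}e_i^2(\g u)$ because the feasible set collapses to $\{\g u\}$. Then, using that each $i\in I_j$ lies in exactly one of $I_0(B),I_1(B),I_2(B)$ and the defining inequalities of those sets (so that for $i\in I_1(B)$ we have $e_i^2(\g u)\leq\epsilon^2$, hence $\min\{e_i^2(\g u),\epsilon^2\}=e_i^2(\g u)$, and for $i\in I_2(B)$ we have $e_i^2(\g u)>\epsilon^2$, hence $\min\{e_i^2(\g u),\epsilon^2\}=\epsilon^2$), I would collapse the limit to $\sum_{i\in I_j}\min\{e_i^2(\g u),\epsilon^2\}=J_{\rm BE}^2(\g u)$. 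For $p=0$, Lemma~\ref{lem:lbbe0} gives $\underline{J}(B)=|I_2(B)|$; since the sets $I_1(B),I_2(B)$ are determined by strict/non-strict comparisons of $\max_{\g w}e_i^2$ or $\min_{\g w}e_i^2$ against $\epsilon^2$ and these converge to $e_i^2(\g u)$, one checks that eventually $I_0(B)$ shrinks so that $|I_2(B)|\to|\{i\in I_j: e_i^2(\g u)>\epsilon^2\}|=J_{\rm BE}^0(\g u)$. (Strictly, $|I_2(B)|$ can only increase as boxes shrink and is bounded by $J_{\rm BE}^0(\g u)$, reaching it for $t$ large enough since any $i$ with $e_i^2(\g u)>\epsilon^2$ eventually has $\min_{\g w\in B}e_i^2(\g w)>\epsilon^2$.)

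Having established $\underline{J}(B)\xrightarrow{t\to\infty}J_{\rm BE}^p(\g u)$ for every active box, I would finish exactly as in Theorem~\ref{thm:converge}: for any $\epsilon'>0$ there is a finite $T$ with $\epsilon'\geq J_{\rm BE}^p(\g u)-\underline{J}(B)\geq \overline{J}(B)-\underline{J}(B)$ for all active $B$ and $t\geq T$, where the last step uses Assumption~\ref{ass:upperboundsbe} ($\overline{J}(B)\leq J_{\rm BE}^p(\g u)$). Picking $\epsilon'\in(0,TOL\cdot J^*]$, which is possible by Assumption~\ref{ass:globalpositivebe}, and recalling that Algorithm~\ref{alg:bb} at iteration $T$ has $\underline{J}=\underline{J}(B)$ for the current box $B$, $\overline{J}\leq\overline{J}(B)$, and $\overline{J}\geq J^*$, the stopping test gives $(\overline{J}-\underline{J})/\overline{J}\leq(\overline{J}(B)-\underline{J}(B))/\overline{J}\leq\epsilon'/\overline{J}\leq TOL\cdot J^*/\overline{J}\leq TOL$, so the algorithm halts.

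The main obstacle is the $p=0$ case: unlike the continuous $\ell_{2,\epsilon}$ cost, $J_{\rm BE}^0$ is integer-valued and the lower bound $|I_2(B)|$ does not converge \emph{continuously} to $J_{\rm BE}^0(\g u)$ but jumps. One must argue that after finitely many splits the set $I_2(B)$ of the surviving box actually equals $\{i: e_i^2(\g u)>\epsilon^2\}$ — i.e., every index with $e_i^2(\g u)>\epsilon^2$ gets a strictly-greater-than-$\epsilon^2$ minimum over the shrunken box — and that no index with $e_i^2(\g u)\le\epsilon^2$ ever enters $I_2(B)$ (which is immediate since $\min_{\g w\in B}e_i^2(\g w)\le e_i^2(\g u)\le\epsilon^2$). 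This requires a short separate argument rather than a limiting one, but it is elementary given the monotone shrinking of the boxes and the fact that $\min_{\g w_j\in B_j}e_i^2(\g w_j)$ increases to $e_i^2(\g u)$; everything else is a routine transcription of the switching-regression proof.
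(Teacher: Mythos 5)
Your proposal is correct and follows essentially the same route as the paper: reduce the argument to showing $\underline{J}(B)\xrightarrow{t\to\infty}J_{\rm BE}^p(\g u)$ via the shrinking boxes and the convergence of $e_i^L(B)$, $e_i^U(B)$ to $e_i(\g u)$, collapse the limit using the defining inequalities of $I_0(B)$, $I_1(B)$, $I_2(B)$, and then conclude exactly as in Theorem~\ref{thm:converge} using Assumptions~\ref{ass:globalpositivebe}--\ref{ass:upperboundsbe}. Your extra remark on the $p=0$ case (that $|I_2(B)|$ reaches $J_{\rm BE}^0(\g u)$ after finitely many splits rather than by a continuous limit) is a point the paper simply asserts ``directly,'' so it is a welcome clarification rather than a deviation.
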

\begin{proof}
The proof is similar to the one of Theorem~\ref{thm:converge}. Its adaptation requires only to show that the lower bounds converge towards the cost function value at the box base point $\g u$ as the iteration counter $t$ tends to the infinity: $\underline{J}(B) \xrightarrow{t\to\infty} J_{\rm BE}^p(\g u)$. As in Theorem~\ref{thm:converge}, this is a consequence of the splitting rule which guarantees that the remaining boxes shrink towards a single point: $B\xrightarrow{t\to\infty} \{\g u\}$. Then, for $p=0$, this directly leads to
\begin{equation}
	\underline{J}(B)=|I_2(B)| \xrightarrow{t\to\infty} \left|\{i\in I_j : e_i^2(\g u) > \epsilon^2\}\right| = J_{\rm BE}^0(\g u).
\end{equation}
For $p=2$, $B\xrightarrow{t\to\infty} \{\g u\}$ yields, for all data index $i\in I_j$, the convergence of $L_i(B)$ and $U_i(B)$ towards zero and thus $e_i^L(B)\xrightarrow{t\to\infty} e_i(\g u)$ and $e_i^U(B)\xrightarrow{t\to\infty} e_i(\g u)$, which imply
\begin{align}
	\underline{J}(B) \xrightarrow{t\to\infty}& \sum_{i\in I_0(B)}\min\left\{e^2_i(\g u) , \epsilon^2\right\} +\min_{\g w_j\in B}\sum_{i\in I_1(B)}e_i^2 (\g w_j)   + |I_2(B)| \epsilon^2\nonumber \\
	\xrightarrow{t\to\infty} &  \sum_{i\in I_0(B)} \min  \left\{e^2_i(\g u) , \epsilon^2\right\} +\sum_{i\in I_1(B)} e_i^2 (\g u)  + |I_2(B)| \epsilon^2 \nonumber\\
	\xrightarrow{t\to\infty} & \ J_{\rm BE}^2(\g u).
\end{align}
Then, we conclude as in Theorem~\ref{thm:converge} using Assumptions~\ref{ass:globalpositivebe}--\ref{ass:upperboundsbe}. 
\end{proof}

\section{Numerical experiments}
\label{sec:exp}

Four sets of experiments are performed to validate the proposed algorithms: for switching regression in Sect.~\ref{sec:expswitched}, for bounded-error identification in Sect.~\ref{sec:expbounded}, for robust estimation in Sect.~\ref{sec:exprobust} and for exact recovery  under sparse noise in Sect.~\ref{sec:expsparse}. 
In all experiments, the initial box bounds on all variables are set to $[-10,10]$. The tolerance on the relative optimality gap is set to $TOL = 0.001$.
The accuracy is measured in terms of the normalized parametric mean squared error, NMSE $= \sum_{j=1}^n \|\g \theta_j - \g w_j\|_2^2 / \|\g \theta_j\|_2^2$, where $\g \theta_j$ is the $j$th parameter vector of the true system, and the classification error rate (CE) equal to the fraction of data points for which the mode $q_i$ is incorrectly estimated. 
The computing times refer to Matlab implementations running on a laptop with a 3Ghz i7-dual core processor. To set the accuracy reference, we use an {\em oracle} based on independent least squares estimations with knowledge of the true classification of the data points (the mode $q_i$ or the inlier/outlier categorization).

\subsection{Switching linear regression}
\label{sec:expswitched}

We first evaluate the global optimization approach to switching regression proposed in Sect.~\ref{sec:switched} with respect to its average computing time as a function of the number of modes $n$, the dimension $d$ and the number of data $N$. For each problem size, we report the average and standard deviation of the computing time over 10 trials, in which the regression vectors $\g x_i$ and the true parameter vectors $\g \theta_j$ are randomly drawn from a uniform distribution in $[-5,5]^d$. For regression vectors, we slightly alterate this in order to avoid the presence of vectors close to the origin (which yield data points that are consistent with all linear models and do not bring information for the estimation). The outputs are then generated with $y_i = \g \theta_{q_i}^T\g x_i + \xi_i$, where the mode $q_i$ is uniformly drawn in $\Q$ and $\xi_i$ is a centered Gaussian noise of standard deviation $\sigma_\xi = 0.1$.  

The results in Table~\ref{tab:switched} show that switching regression problems with up to 10\,000 points in dimension 5 can be solved in about one minute on a standard laptop. But, as expected, the computing time quickly increases with the dimension and the number of modes. Yet, these results support the claim that the complexity of the proposed approach remains reasonably low with respect to the number of data. In particular, Table~\ref{tab:switched} suggests an even less than linear complexity in $N$, indicating that the number of data does not critically influences the number of iterations and mostly affects the linear algebra and convex optimization operations. This would explain why the observed complexity in $N$ is sub-linear instead of linear: when $N$ increases, such operations benefit from standard parallel processing features that do not offer significant speed-ups for small $N$. 
Note that given the NP-hardness of the problem, the high complexity with respect to $n$ and $d$ appears hardly overcomable by any global optimization approach. 

Regarding the accuracy of the approach, both the parametric (NMSE) and classification (CE) errors reported in Table~\ref{tab:switched} are very low (NMSE $<10^{-4}$ and CE $<3\%$ in all cases) and comparable to the ones obtained by the oracle (not reproduced here). In particular, the few remaining classification errors are inherently due to the fact that noisy data points generated with one mode can be better approximated by another one. 
\begin{table}
\centering
\caption{Average and standard deviation of the computing time, the NMSE and the classification error rate (CE) for the global optimization of a switching linear model with $n$ modes for different dimensions $d$ and number of data $N$. \label{tab:switched}}
\begin{tabular}{c|c|r|r|r|r}\hline
	$n$ & $d$ & $N$ & Time (s) & NMSE $\times 10^{4}$ & CE ($\%$)\\\hline
 2 	& 2 & 500  & $0.1 \pm 0.1$ & $0.252\pm 0.419$ & $1.2\pm 0.4$\\	
 	&   & 1\,000  & $0.2 \pm 0.1$ & $0.092\pm 0.079$ & $1.2\pm 0.2$\\	
	&   & 10\,000  & $0.6 \pm 0.4$ & $0.018\pm 0.022$ & $1.6\pm 0.7$\\\cline{2-6}
	& 3 & 500  & $0.8 \pm 1.3$ & $0.101\pm 0.058$ & $0.9\pm 0.5$ \\	
	&	& 1\,000 & $0.6 \pm 0.3$ & $0.335\pm 0.862$ & $0.8\pm 0.5$\\
	&   & 10\,000 & $2.3\pm 1.5$ & $0.009\pm 0.008$ & $1.0\pm 0.4$\\\cline{2-6}
	& 4 & 500 & $4.1\pm 2.9$ & $0.128\pm 0.063$ & $1.0\pm 0.6$\\
	&   & 1\,000 & $5.8\pm 6.9$ & $0.064\pm 0.065$ & $0.8\pm 0.4$ \\
	&   & 10\,000 & $11.9\pm 11.9$ & $0.005\pm 0.002$ & $0.7\pm 0.1$\\\cline{2-6} 
	& 5 & 500 & $24.0\pm 20.7$ & $0.145\pm 0.107$ & $0.7\pm 0.5$\\
	&   & 1\,000 & $35.3\pm 29.9$ & $0.078\pm 0.009$ & $0.6\pm 0.3$ \\
	&   & 10\,000 & $66.7\pm 20.1$ & $0.009\pm 0.005$ & $0.6\pm 0.1$\\\hline		
3 	& 2 & 500  & $1.3 \pm 0.9$ & $0.899\pm 0.824$ & $2.3\pm 0.6$\\	
 	&   & 1\,000  & $1.8 \pm 1.4$ & $0.184\pm 0.126$& $2.3\pm 0.8$ \\	
	&   & 10\,000  & $3.8 \pm 2.2$ & $0.036\pm 0.045$ & $2.2\pm 0.3$\\\cline{2-6}
	& 3 & 500  & $22.8 \pm 23.2$ & $0.289\pm 0.177$ & $1.8\pm 0.6$ \\	
	&	& 1\,000 & $50.7 \pm 45.3$ & $0.143\pm 0.088$ & $1.9\pm 0.8$ \\
	&   & 10\,000 & $72.4\pm 38.8$ & $0.026\pm 0.017$ & $1.8\pm 0.4$\\\cline{2-6}
	& 4 & 500 & $783\pm 626$ & $0.267\pm 0.131$ & $1.5\pm 0.2$\\
	&   & 1\,000 & $1404\pm 977$ & $0.098\pm 0.015$ & $1.4\pm 0.5$ \\
	&   & 10\,000 & $2061\pm 1239$ & $0.015\pm 0.005$ & $1.5\pm 0.1$\\\hline
\end{tabular}
\end{table}

\paragraph*{Switched system identification example} Consider the benchmark example from~\cite{Vidal08}, where the aim is to identify the dynamical system arbitrarily switching between $n=2$ modes as
\begin{equation}
	y_i =\begin{cases}
		-0.9 y_{i-1} + u_i + \xi_i,&\mbox{if } q_i = 1\\
		0.7 y_{i-1} - u_i + \xi_i,&\mbox{if } q_i = 2 
	\end{cases}
\end{equation}
from $N=1000$ data points with $\g x_i=[y_{i-1}, u_i]^T$ and $\xi_i$ a centered Gaussian noise of standard deviation $\sigma_\xi = 0.2$. Over 100 trials with random input ($u_i$) and noise sequences, the proposed algorithm obtains an average NMSE of $1.1547\times10^{-4}$ very close to the one of the oracle equal to $1.1381\times 10^{-4}$ and about 4 times smaller than the $4.8356\times 10^{-4}$ reported in \cite{Vidal08}. 
In this experiment, the average computing time was only $0.43\pm0.05$ seconds and the dynamical nature of the data did not seem to alter the efficiency of the algorithm.

\subsection{Bounded-error identification of switched dynamical systems}
\label{sec:expbounded}

Consider now the example in \cite{Bako11} where the aim is to estimate the system arbitrarily switching between $n=3$ modes as
\begin{equation}\label{eq:systembako}
	y_i=\begin{cases}
		 -0.4 y_{i-1} + 0.25 y_{i-2} - 0.15 u_i + 0.08 u_{i-1} + \xi_i ,& \mbox{if } q_i=1\\
		 1.55 y_{i-1} -0.58 y_{i-2} - 2.1 u_i + 0.96 u_{i-1} + \xi_i ,& \mbox{if } q_i = 2\\
		 y_{i-1} - 0.24 y_{i-2} - 0.65 u_i + 0.3 u_{i-1} + \xi_i ,& \mbox{if } q_i = 3
	\end{cases}
\end{equation}
from $N=300$ data points  with $\g x_i=[y_{i-1}, y_{i-2}, u_i, u_{i-1}]^T$ and a centered Gaussian noise $\xi_i$ of standard deviation $\sigma_\xi$ such that the signal-to-noise ratio is about 30 dB. 
Applying the iterative bounded-error approach depicted in Sect.~\ref{sec:boundederror} with the $\ell_{0,\epsilon}$ and $\ell_{2,\epsilon}$ losses for $\epsilon=1.5\sigma_\xi$, we obtain rather accurate estimates of the parameters, as reported in Table~\ref{tab:paramestimates}. In this setting, the overall computing time for the multiple global optimizations of~\eqref{eq:Jbe0} or~\eqref{eq:Jbe} remains reasonable: a few seconds for the $\ell_{0,\epsilon}$-loss and close to one minute for the $\ell_{2,\epsilon}$-loss. 
In addition, the low complexity with respect to the number of data of the proposed approach is here also verified: in a similar experiment with ten times more data ($N=3000$), the computing time was only multiplied by two for the $\ell_{2,\epsilon}$ loss and grew linearly for the $\ell_{0,\epsilon}$ loss. 

Of course, on this example taken from \cite{Bako11}, the sparse optimization method developed in \cite{Bako11} and based on $\ell_1$-minimization can also achieve a good accuracy in a shorter amount of time. However, extending the system~\eqref{eq:systembako} with 2 more modes while maintaining a uniform distribution for $q_i$ yields a much more difficult problem for this method which then fails to return relevant estimates. 
Indeed, in this case, the dominant mode generates only about $20\%$ of the data points and the optimal error vector becomes far from sparse/compressible, hence breaking the desired behavior of convex relaxations. In such a case, it is critical to consider the global optimization of the nonconvex problem. Applying the proposed approach, we obtain accurate estimates of the five parameter vectors in reasonable time (see the two last rows of Table~\ref{tab:paramestimates}). 

\begin{table}
\centering
\caption{NMSE and computing time (seconds) for the identification of the system~\eqref{eq:systembako} with different loss functions.\label{tab:paramestimates}}
\begin{tabular}{cc|lr|lr|lr}\hline
& &  \multicolumn{2}{c|}{$\ell_{0,\epsilon}$-loss} & \multicolumn{2}{c|}{$\ell_{2,\epsilon}$-loss} & \multicolumn{2}{c}{$\ell_1$-loss \cite{Bako11}} \\
$n$ & $N$ & NMSE & Time & NMSE & Time & NMSE & Time \\\hline
3 & 300 & 0.0030 & 4.5 & 0.0018 & 67 & 0.0502 & 0.2 \\ 
	& 3000 & 0.00046 & 35.0 & 0.00003 & 123 & 0.00003 & 3.4 \\\hline  
5 & 300 & 0.0069 & 14.3 & 0.0052 & 417.2 &  2.8128 & 0.2\\  
  & 3000 & 0.0029 & 95.3 & 0.00025 & 926.3 & 2.0917 & 5.6\\\hline
\end{tabular}
\end{table}

\subsection{Bounded-error estimation in the presence of outliers} 
\label{sec:exprobust}

We now evaluate the bounded-error approach for robust estimation in the presence of outliers. In this case, the problems~\eqref{eq:Jbe} or \eqref{eq:Jbe0} are solved only once to estimate a single model from the maximal number of points that can be considered as inliers. We compare the proposed algorithms with the standard $\ell_1$-minimization in  a setting similar to the one in \cite{Bako16}: an increasing fraction $r$ of 500 data points in dimension 4 are corrupted by outliers $\zeta_i$ drawn from a Gaussian distribution with mean 100 and standard deviation $1000$: $y_i = \g \theta^T\g x_i + \xi_i + \zeta_i$. The results are reported in Fig.~\ref{fig:robust} (top) for $\epsilon=1.5\sigma_{\xi}$ and $\sigma_\xi=0.1$. 

The proposed global minimizations of the $\ell_{0,\epsilon}$ and $\ell_{2,\epsilon}$ losses perform similarly well at rejecting outliers and yielding accurate estimates with an error close to the one of the oracle. On the other hand, the error of the $\ell_1$-minimization quickly increases with more than $60\%$ of outliers. Regarding the computing time, these results are obtained in a few seconds for the $\ell_{0,\epsilon}$ loss and about 30 seconds for the $\ell_{2,\epsilon}$. This is slower than the $\ell_1$-minimization heuristic (which takes less than a second) but still reasonable given the gain in accuracy.

In addition, the $\ell_1$-based method is known to break down when estimating affine models with data corrupted by more than $50\%$ of positive gross errors ($\zeta_i>0$) \cite{Bako16}. Indeed, the bottom plot of Fig.~\ref{fig:robust} shows that, when $\g x_i$ includes a constant component in order to implement an affine model and $\zeta_i$ is replaced by $|\zeta_i|$, the error of this approach quickly increases with the number of outliers. 
At the opposite, the proposed global optimizations are not affected by such adversarial conditions and the error remains comparable to the one of the oracle.

\begin{figure}
\centering
\includegraphics[width=0.45\linewidth]{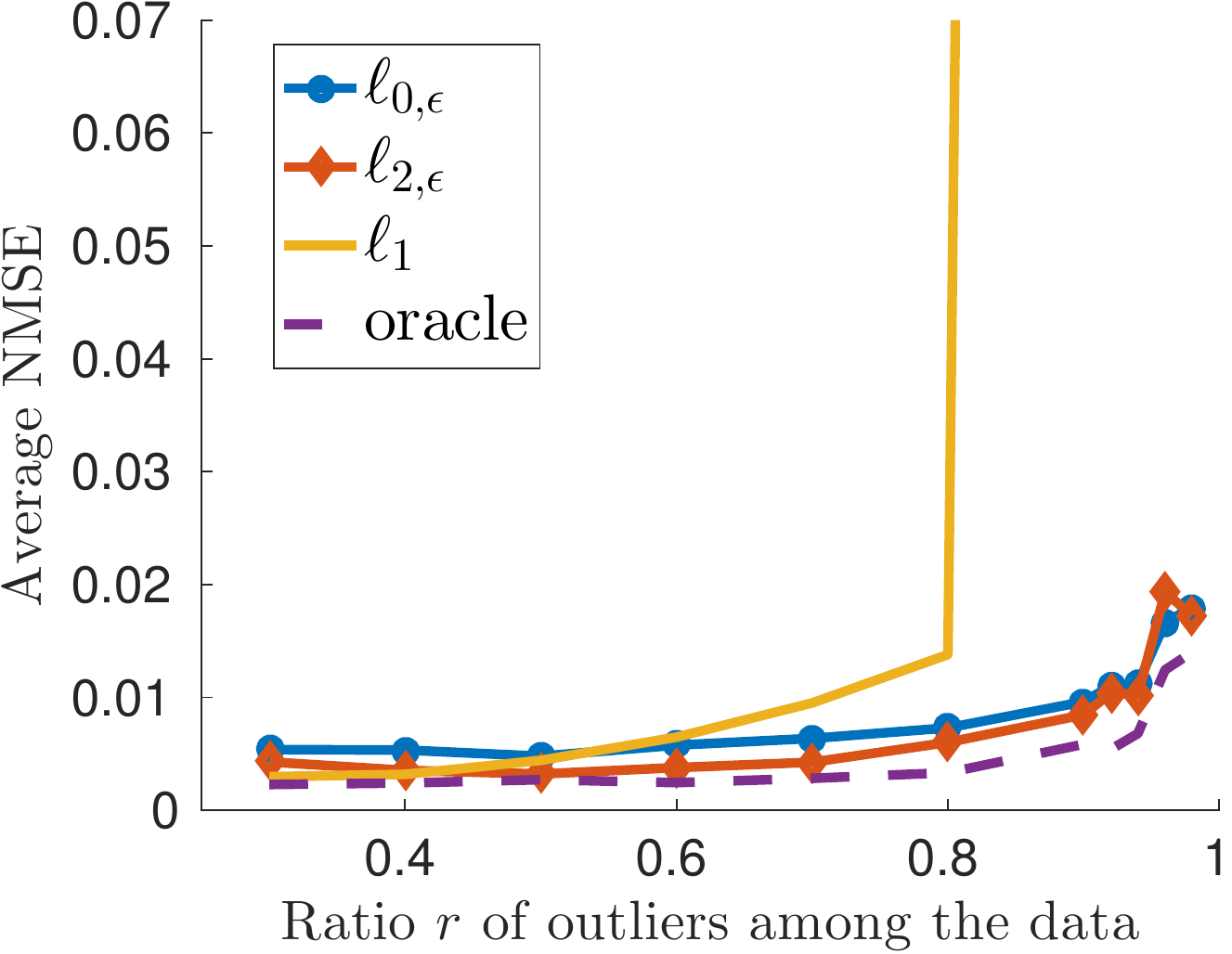}
\includegraphics[width=0.45\linewidth]{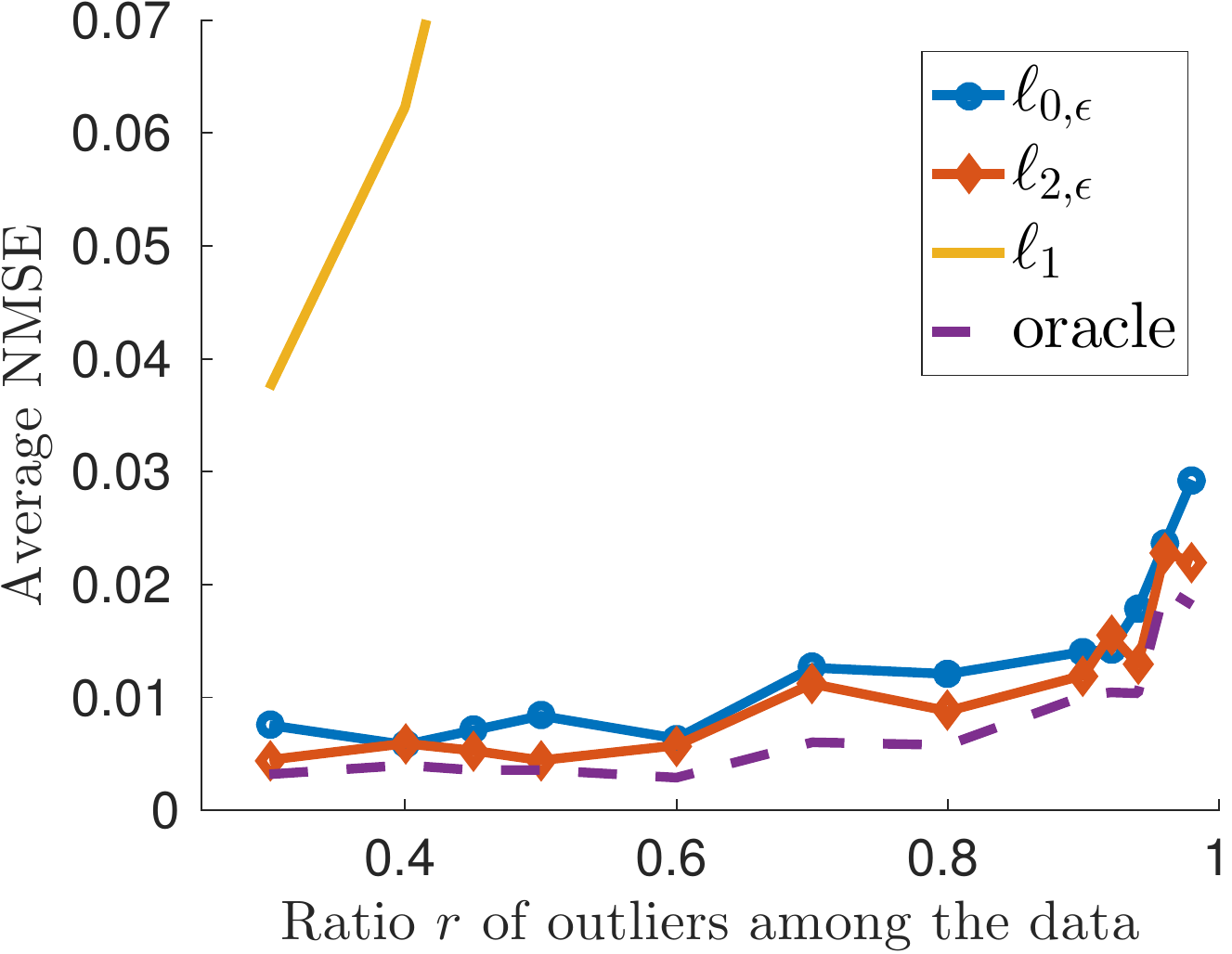}
\caption{Average NMSE over 10 trials vs the ratio of outliers for a linear model (left) and an affine model (right).\label{fig:robust}}
\end{figure}

\subsection{Exact recovery with sparse and gross measurement errors}
\label{sec:expsparse}

Most of the analysis of convex relaxations in \cite{Bako16} actually deals with exact recovery in the case where the data is noiseless ($\xi_i=0$) except for sparse and gross errors. The proposed bounded-error algorithms can also be applied in this setting, e.g., with $\epsilon = 10^{-6}$, and evaluated on the basis of the probabililty of exact recovery, estimated as the fraction of successful trials\footnote{In practice, exact recovery is said to occur when $\|\g w - \g \theta\|_2 < 10^{-6}$.}.   
We study this probability as a function of the ratio $r\in[0.3,0.99]$ of outliers in the data for experiments in a setting similar to Sect.~\ref{sec:exprobust} except that $\sigma_\xi = 0$.\footnote{Note that with $N=500$, a ratio $r=0.99$ of outliers leaves 5 uncorrupted data points, which should be enough to exactly recover $\g \theta\in\R^d$ with $d=4$.} 
In particular, we report in Table~\ref{tab:exactrecovery} the maximal ratio $r$ leading to exact recovery over all trials, i.e., for which the probability estimate is one. This experiment is conducted in two scenarios: with a linear model and outliers $\zeta_i$ or with an affine model ($\g x_i$ includes a constant component) and positive outliers $|\zeta_i|$.

For both scenarios, the results show that the global optimization of nonconvex cost functions provides a larger range of cases for exact recovery than the standard convex relaxation. In addition, this range is unaffected by the linear or affine nature of the model and the constant sign of the outliers, whereas these factors can seriously alter the performance of the $\ell_1$-minimization. In all these trials, the proposed algorithms performed global optimization in less than one second for $r\leq 90\%$ and less than 10 seconds for $r\leq 98\%$. For $r=99\%$, we stopped the algorithm after 2 minutes without obtaining the solution.  

\begin{table}
\centering
\caption{Exact recovery with sparse and gross measurement errors occurring only at $rN$ points. The percentages are the maximal values of $r$ for which the methods achieve exact recovery over all trials.\label{tab:exactrecovery}}
\begin{tabular}{l|c|c|c}\hline
Loss function & $\ell_{0,\epsilon}$ & $\ell_{2,\epsilon}$ & $\ell_{1}$ \\\hline
Linear case & $98\%$ & $98\%$ & $80\%$ \\
Affine case with positive outliers & $98\%$ & $98\%$ & $43\%$ \\\hline
\end{tabular}
\end{table}

\section{Conclusions}
\label{sec:conclusion}

The paper introduced new branch-and-bound algorithms to solve several nonconvex optimization problems relevant to hybrid system identification and robust estimation. Compared with most of the literature, a specificity of the proposed approach is that it offers global optimality guarantees that are independent of the data. 
In addition, by focusing on the continuous variables the approach remains scalable with respect to the number of data. Indeed, 
switching regression problems with thousands of points could be solved in seconds with global optimality certificates for the first time.
 
However, the worst-case complexity remains exponential in the dimension of the data and the number of modes. Given the NP-hardness of the considered problems, such a worst-case complexity can be expected for any global optimization approach. In this regard, an interesting open issue concerns the characterization of the (typically lower) theoretical average time complexity, for instance under a given probability distribution of the data or of the noise.

\end{document}